\documentclass[lettersize,journal]{article}

\usepackage{arxiv}

\usepackage{amssymb,amsmath,amsthm,amsthm}
\allowdisplaybreaks 
\usepackage{amsbsy}
\usepackage{mathtools}
\usepackage{enumitem}
\usepackage{algorithm}
\usepackage{array}
\usepackage{textcomp}
\usepackage{stfloats}
\usepackage{url}
\usepackage{verbatim}
\usepackage{graphicx}
\usepackage{cite}
\usepackage{multirow}
\usepackage{tikz}
\usepackage{soul}
\usepackage[hypertexnames=false,hidelinks]{hyperref}
\usepackage{subcaption}
\usepackage{caption}

\newtheorem{theorem}{Theorem}
\newtheoremstyle{nonitalic} 
{3pt} 
{3pt} 
{\upshape} 
{} 
{\bfseries} 
{.} 
{.5em} 
{} 

\theoremstyle{nonitalic}
\theoremstyle{nonitalic}
\newtheorem{remark}[theorem]{Remark}

\theoremstyle{plain} 
\newtheorem{definition}[theorem]{Definition}

\newtheorem{lemma}[theorem]{Lemma}


\usepackage{booktabs}

\newcommand{\N}{\mathbb{N}}
\newcommand{\R}{\mathbb{R}}

\newcommand{\argmin}{\operatorname{argmin}}

\newcommand{\bx}{{\boldsymbol{x}}}

\newcommand{\lowhigh}{\Psi_{\textnormal{L2H}}}
\newcommand{\lowhighp}{\Psi_{\textnormal{PL2H}}}
\newcommand{\highlow}{\Psi_{\textnormal{H2L}}}
\newcommand{\highlowp}{\Psi_{\textnormal{PH2L}}}
\newcommand{\highlowk}{\Psi_{\textnormal{H2L},k}}
\newcommand{\highlowalpha}{\Psi_{\textnormal{H2L},\alpha}}

\newcommand{\xHsim}{\bx^\textnormal{sim}_{\textnormal{H}}}
\newcommand{\xHsimk}{\bx^\textnormal{sim}_{\textnormal{H},k}}
\newcommand{\xHsimki}{\bx^\textnormal{sim}_{\textnormal{H},k,i}}
\newcommand{\xL}{\bx_{\textnormal{L}}}
\newcommand{\xLsim}{\bx^\textnormal{sim}_{\textnormal{L}}}
\newcommand{\xH}{\bx_{\textnormal{H}}}
\newcommand{\xP}{\bx_{\textnormal{P}}}


\usepackage{xcolor}
\definecolor{azure}{rgb}{0.0, 0.5, 1.0}
\definecolor{darkred}{RGB}{175, 0, 0}
\definecolor{niceblue}{RGB}{0, 102, 204} 
\definecolor{nicegreen}{RGB}{76, 175, 80} 
\definecolor{LightGrey}{RGB}{211, 211, 211} 
\definecolor{darkred}{rgb}{0.8, 0.0, 0.0}


\usepackage{tcolorbox}
\newtcolorbox{low-dose-CA-problem}{
	colback=azure!5!white,
	colframe=azure!75!black,
	title=The Low-to-High Contrast Agent  Problem
}

\usepackage{tikz}
\usepackage{tikz-network}
\usetikzlibrary{3d,arrows.meta}

\date{}

\begin{document}
\title{LIP-CAR: contrast agent reduction by a deep learned inverse problem}

\author{Davide Bianchi \\
School of Mathematics (Zhuhai), \\ Sun Yat-sen University,\\ Zhuhai,  519082, China. \\ \texttt{bianchid@sysu.edu.cn}. \\
\And Sonia Colombo Serra \\
Centro Ricerche Bracco,\\ Bracco Imaging SpA,\\ Colleretto Giacosa, 10010, Italy. \\ \texttt{sonia.colombo@bracco.com}. \\
\And Davide Evangelista \\
Department of Computer Science and Engineering, \\ University of Bologna,\\ Bologna, 40126, Italy. \\
\texttt{davide.evangelista5@unibo.it} \\
\And Pengpeng Luo \\
Bracco Imaging Medical Technologies Co. Ltd, \\ Shanghai, 200000, China. \\ \texttt{ethe.luo@bracco.com}. \\
\And Elena Morotti \\
Department of Political and Social Sciences, \\ University of Bologna,\\ Bologna, 40126, Italy. \\ \texttt{elena.morotti4@unibo.it}. \\
\And Giovanni Valbusa \\
Global R\&D,\\ Bracco Imaging SpA,\\ Colleretto Giacosa, 10010, Italy. \\ \texttt{giovanni.valbusa@bracco.com}.}

\maketitle

\begin{abstract}
The adoption of contrast agents in medical imaging protocols is crucial for accurate and timely diagnosis. While highly effective and characterized by an excellent safety profile, the use of contrast agents has its limitation, including rare risk of allergic reactions, potential environmental impact and economic burdens on patients and healthcare systems. In this work, we address the contrast agent reduction (CAR) problem, which involves reducing the administered dosage of contrast agent while preserving the visual enhancement. 
The current literature on the CAR task is based on deep learning techniques within a fully image processing framework. These techniques digitally simulate high-dose images from images acquired with a low dose of contrast agent. We investigate the feasibility of a ``learned inverse problem'' (LIP) approach, as opposed to the end-to-end paradigm in the state-of-the-art literature. 

Specifically, we learn the image-to-image operator that maps high-dose images to their corresponding low-dose counterparts, and we frame the CAR task as an inverse problem. We then solve this problem through a regularized optimization reformulation. Regularization methods are well-established mathematical techniques that offer robustness and explainability. Our approach combines these rigorous techniques with cutting-edge deep learning tools. Numerical experiments performed on pre-clinical medical images confirm the effectiveness of this strategy, showing improved stability and accuracy in the simulated high-dose images. 
 
\end{abstract}

\keywords{Deep Learning, Neural Networks, Inverse Problem Regularization, Contrast Agent, Computed Tomography, Magnetic Resonance Imaging, Iodine, Gadolinium, Explainability.}

\section{Introduction}\label{sec:intro}

\noindent Modern technologies used for Computed Tomography (CT) and Magnetic Resonance Imaging (MRI) offer an unquestionable value to health care, but the visibility of very small or low-contrasted structures is not always guaranteed. 
Contrast Agents (CAs) are chemical compounds introduced into the patient's body to alter the way the body interacts with physical stimuli (X-rays or magnetic fields) and, hence, to better distinguish (or ``contrast'') lesions from healthy surrounding tissues. 
CAs are used in various diagnostic tasks, like searching for tumors~\cite{claussen1985application,zhou2013gadolinium}, mesenteric ischemia~\cite{menke2010diagnostic}, pulmonary artery embolism~\cite{nicod1987pulmonary}, to cite just a few applications. 
See Figure~\ref{fig:pre-low-high}.

\begin{center}
	\begin{figure}
		\centering%
		\begin{minipage}{0.22\textwidth}
			\centering
			\includegraphics[width=\textwidth]{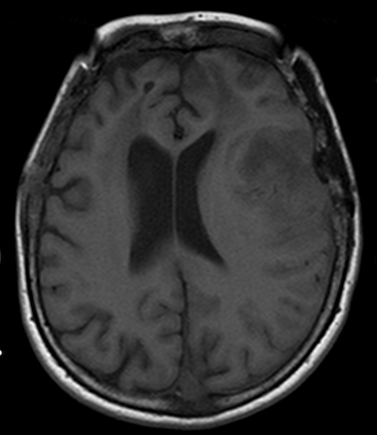}\ 
		\end{minipage}\hspace{1cm}
		\begin{minipage}{0.22\textwidth}
			\centering
			\includegraphics[width=\textwidth]{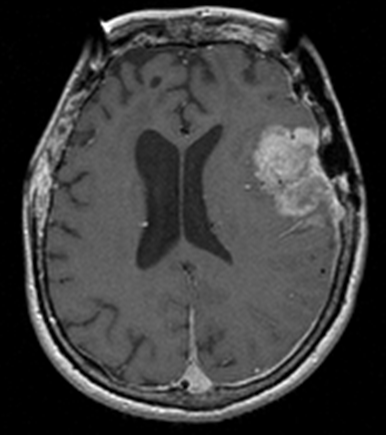} 
		\end{minipage}\caption{ MRI images of a human brain\cite{vassantachart2023segmentation}, acquired without CA (on the left) and with a high dosage of CA (on the right).   
       }\label{fig:pre-low-high}
	\end{figure}
\end{center}

Iodine- and gadolinium-based CAs are commonly employed in CT and MRI scans, respectively. These substances are generally deemed safe when administered within prescribed dose thresholds. Nevertheless, achieving a balance between administering a minimal amount of CA and obtaining high-fidelity imaging poses a notable challenge. For instance, despite rare or very rare, there are documented adverse effects associated with the use of these CAs \cite{bush1991acute,singh2008iodinated}, and regulatory bodies like the European Medicines Agency have imposed restrictions on the use of certain linear gadolinium based CAs as a precaution about their retention/deposition in the brain and other tissues \cite{EMA,runge2017critical}, even if in the absence of any associated clinical sign. 


In this scenario the scientific community is looking for solutions to reduce the use of CAs. In the following, we refer to the Contrast Agent Reduction (CAR) problem as the task of minimizing the level of CA while preserving the image enhancement granted by a high-level dose of CA.

Thanks to the recent advancements in deep learning, new purely digital imaging techniques have been developed to address the CAR problem. See for example \cite{gong2018deep, bone2021contrast, haubold2021contrast, luo2021deep, montalt2021reducing, pasumarthi2021generic, bone2022dose, ammari2022can, wang2022deep, haase2023reduction,muller2023using}.
In these works, deep neural network (NN) operators are used, taking an image (or a stack of images) acquired with a low dosage of CA and directly computing a simulation of the same image acquired with a high CA dosage.
These approaches are learned ``end-to-end'' processing methods. In a compact way, we can write them as: 
\begin{equation*}
    \lowhigh \colon \xL \longmapsto \xH,
\end{equation*}
where 
$\lowhigh$ is a pre-trained NN mapping low-dose to high-dose images, $\xL$ is the acquired low-dose image and $\xH$ is the desired high-dose image.

However, when dealing with real-world applications, every acquired image is contaminated by some unavoidable error of typically unknown distribution and sources. We take that into account by considering:
\begin{equation}\label{eq:noiseIn}
    \xL^\delta \coloneqq \xL + \boldsymbol{\eta}^{in},
\end{equation}
where $\xL$ represents the clean, inaccessible image and $\boldsymbol{\eta}^{in}$ is the intrinsic perturbation due to the (CT or MRI processing) imaging system.
We reasonably assume $\|\boldsymbol{\eta}^{in} \|_2\leq \delta$ with $\delta>~0$ and call $\delta$ the noise intensity.
Under this notation, the learned end-to-end method takes the following final form:
\begin{equation}\label{eq:EndToEnd}
\mbox{Given}\;\; \xL^\delta, \quad \mbox{compute}\;\; \xHsim \coloneqq \lowhigh (\xL^\delta).
\end{equation}

However, despite NNs achieving outstanding performance in end-to-end schemes for various imaging tasks, accuracy typically comes at the expense of robustness in deep learning. 
State-of-the-art literature has demonstrated that optimal stability and accuracy cannot be achieved at the same time \cite{antun2020instabilities,gottschling2020troublesome,colbrook2021can}. For instance, the network performance in end-to-end frameworks can be drastically affected by the presence of unquantifiable noise on the images when only a few noise intensities have been considered during the training phase~\cite{goodfellow2014explaining}. This can cause artifacts to appear on the output image~\cite{morotti2021green, evangelista2023ambiguity}, and even false positive details may arise~\cite{gottschling2020troublesome}. Given the computational impracticality of training a NN across all potential noise intensities and distributions, or any generic perturbation, this presents a major challenge.\\
Additionally, the instability of deep-learning-based results is a significant concern in the medical field, which is one of the main areas involved in the Explainable-AI revolution~\cite{Tjoa2021ExplainableAI}. This new discipline rejects the black-box nature of deep learning which prevents interpretation and thus undermines the reliability of medical practices. Instead, it promotes initiatives to advance data-based, mathematically- and technically-grounded medical applications of NNs. See~\cite{ruthotto2020deep,cheng2023continuous,tai2024pottsmgnet,haber2017stable,ruiz2023neural} for some recent examples in this direction. \\

Due to all the aforementioned issues, we propose a paradigm shift. Instead of using a direct end-to-end approach, we look at the CAR problem as an inverse problem. 
We name our proposal LIP-CAR, referring to the Learned Inverse Problem for Contrast Agent Reduction, as illustrated in Figure~\ref{fig:learned_inverse_problem}. LIP-CAR operates in two primary stages.


Firstly, we train an image-to-image NN, denoted as $\highlow$, which is designed to map the acquired high-dose images to their (simulated) low-dose counterparts:
\begin{equation}\label{eq:CAROperator}
	\highlow: \xH \longmapsto \xL.
\end{equation}
This NN operates inversely to $\lowhigh$, as illustrated in Figure~\ref{fig:learned_inverse_problem}.
We denote the mapping function $\highlow$ as the ``LIP forward operator'', as it represents the forward operation of our inverse problem, and it simulates the distribution of concentration of the chemical compound through the body by mapping high-dose to low-dose images.

Secondly, taking into account \eqref{eq:noiseIn}, we address the associated inverse problem:
\begin{equation}\label{eq:LIP}
\mbox{Given}\;\; \xL^\delta, \quad \mbox{solve}\;\; \highlow (\bx) = \xL^\delta,
\end{equation}
by means of regularized optimization techniques. \\ 
This shift in approach is crucial as it allows the use of a wide range of well-established and robust techniques from inverse problems regularization theory. On the one hand, we can stabilize the simulated high-dose images against perturbations in the low-dose acquired images; on the other hand, we can concentrate on enhancing the accuracy of the NNs with the confidence that, in the limit case, the simulated high-dose images converge to the true high-dose images.
\begin{figure*}[hbt!]
    \centering
    \includegraphics[width=0.9\textwidth]{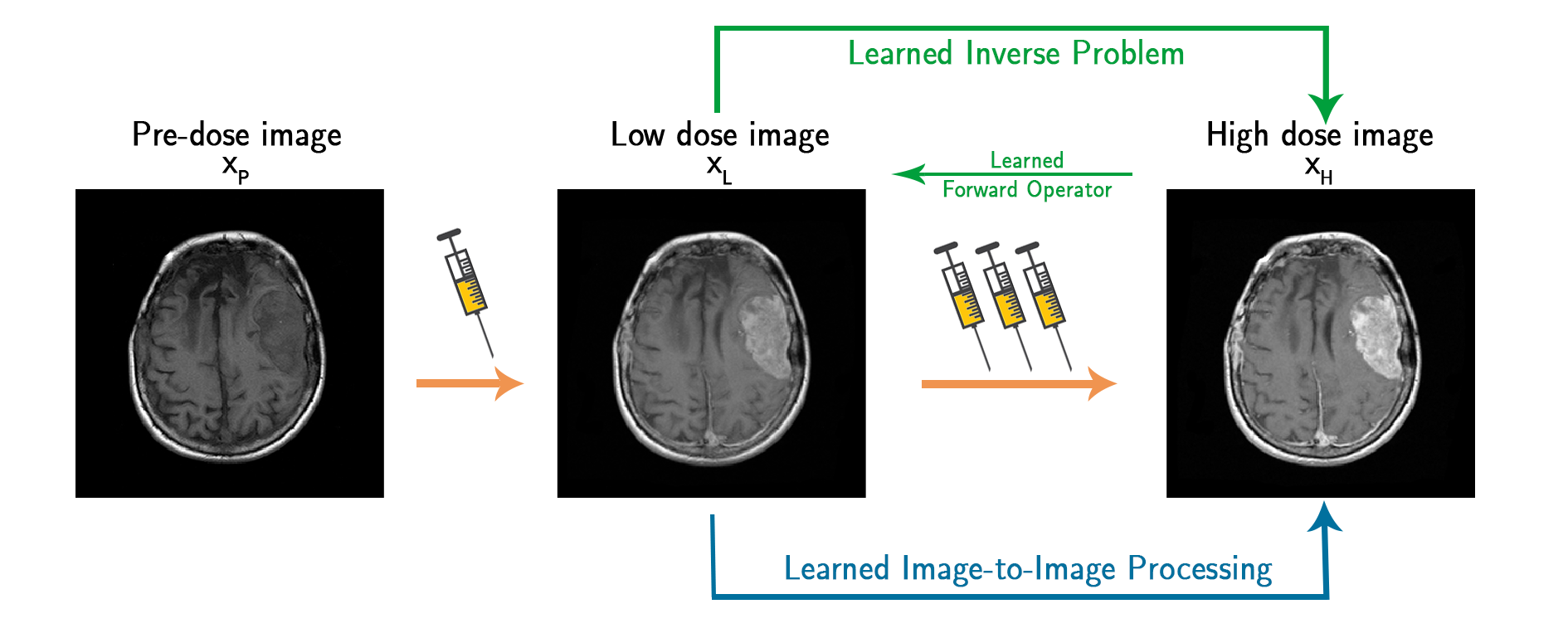}
    \caption{ Visual representation of the paradigm change for the CAR task: from an end-to-end NN-based approach to a Learned Inverse Problem framework.}
 \label{fig:learned_inverse_problem}
\end{figure*}

The contribution of this paper can thus be summarized as follows.
\begin{enumerate}
    \item We introduce a new approach, LIP-CAR, for the CAR task that combines mathematical regularization techniques with cutting-edge deep learning tools in an accurate, well-performing two-step procedure. The convergence and stability of the method are guaranteed by a rigorous theoretical foundation.
    \item LIP-CAR represents a versatile framework, realized here through distinct implementations. At the stage of formulating the inverse problem statement, we test two different regularizers to discuss the potential and variety of regularization priors. Furthermore. the adaptability of the LIP forward operator easily allows for embedding registered pre-dose injection images, a known enhancer of data-driven performance. 
    \item Our numerical experiments, performed on a data set of  pre-clinical images, assess that LIP-CAR can achieve high accuracy and is more robust than the end-to-end approach.
\end{enumerate}

The manuscript is organized as follows: In Section~\ref{sec:state_of_art} we briefly revise the state-of-the-art literature about the CAR problem. In Section~\ref{sec:model} we introduce our new model method LIP-CAR along with a mathematical analysis of its regularization and stability properties. Sections~\ref{sec:exp_setup}-\ref{sec:num_results} present the experimental setup and  numerical results, respectively, where we compare the end-to-end method \eqref{eq:EndToEnd} and our regularized proposal \eqref{eq:LIPCAR}. Finally, in Section~\ref{sec:conclusions} we draw the conclusions and outline future directions of research.

\section{State of the art}\label{sec:state_of_art}

Mathematical models for the CA distribution-excretion phenomenon have been introduced since the 1990s. We refer the reader to the seminal paper by Tofts et al. \cite{tofts1999estimating} and the references therein.
However, on the one hand, the exact mechanism still remains unknown, and this is primarily due to the difficulty in fully understanding and accurately modeling the extensive range of complex physical and chemical interactions that occur within each different body following the administration of CAs. On the second hand, numerically solving the partial differential equations describing the phenomenon is challenging due to the ill-posed nature of the problem. 

With the rise of deep learning, interest in the CAR problem as we intend it here and, more generally, in CA-related biomedical imaging problems, has seen a new surge in the scientific community. NNs are universal approximators \cite{hornik1989multilayer}, suggesting that with the correct architecture and training, the unknown CA distribution phenomenon can be well-approximated. In that sense recent results are promising. For example, in \cite{sainz2024exploring}, the authors use Physics-Informed Neural Networks in combination with pharmacokinetic models to retrieve vascularization parameters in dynamic contrast-enhanced MRI. In \cite{della2020deepspio},  a novel neural network architecture is presented to quantify the concentration of superparamagnetic iron oxide particles, which are used as CAs in several MRI diagnostic tasks. In \cite{yang2021robust}, an unsupervised method using the deep image prior technique is introduced for phase unwrapping in quantitative phase imaging, which is used for CA-free biomedical imaging. In \cite{bone2021contrast}, the authors train a V-Net to simulate high-dose MRI brain images from low-dose images acquired using only 25\% of the standard gadolinium-based CA~dose.

For the CAR problem, the state-of-the-art literature now strongly relies on NNs, and it always implements the direct end-to-end approach \eqref{eq:EndToEnd} by learning a $\lowhigh$ operator. Notably, in papers such as \cite{bone2021contrast,bone2022dose,pasumarthi2021generic,gong2018deep,haase2023reduction}, procedures with pre-dose image acquisition are considered and the authors have boosted the end-to-end operator defining it as:
\begin{equation}\label{eq:EndToEnd_withP}
    \Psi_{\textnormal{PL2H}}: (\xP, \xL) \longmapsto \xH.
\end{equation}
In this case, both the pre-dose $\xP$ and low-dose images are passed as input to the NN to predict the high-dose one. This approach widely enhances the end-to-end performance, above all in terms of image accuracy.

However, one of the inherent challenges of the direct low-to-high approach lies within the task itself. We are transitioning directly from an image with smoother characteristics (low-dose) to one that exhibits more pronounced discontinuities (high-dose). As widely studied in the literature, this image enhancement process is not robust and prone to generate errors in case of even small inconsistencies on the data \cite{hansen1998rank, scherzer2009variational}. 
Consequently, the end-to-end approach for CAR application may inherit ill-posedness from the underlying imaging task and its robustness with respect to perturbations (both in terms of extra noise and generalization error to unseen data) must be assessed. See, for example, \cite{muller2024diffusion}, where false positives (artificial lesions) may appear in the digitally simulated high-dose images. 

Interestingly, in many applications different than CA but still involving the digital processing of signals and images, the end-to-end usage of NNs is giving way to new hybrid approaches, where deep learning tools are embedded into variational schemes in a mathematically grounded scenario \cite{LOPEZTAPIA2021_PastPresentFuture, Genzel_2022_RobustnessIncluded}. 
In the vast available literature, in fact, deep networks have already been successfully used to learn optimal algorithmic parameters \cite{arridge2019solving, Monga_2021_unrolling}, to define suitable regularizers \cite{schwab2019deep,li2020nett,bianchi2023uniformly,bianchi2023graph,bianchi2023data,cascarano2022plug}, or to speed up the reconstruction procedure \cite{evangelista2023rising}. All these studies have inspired us to design a novel use of  NNs for the CAR problem.

\section{The proposed method}\label{sec:model}
%

We begin this section by establishing some of the notations and assumptions used throughout the remainder of the manuscript.
We work on 2D slices with dimensions $H\times W$, although the framework is also applicable to 3D imaging with few adjustments.
All images $\bx$ are grayscale and normalized to $[0,1]$, for simplicity. They can be represented as functions $\bx \colon P \to [0,1]$, where $P=\{i \mid i=1,\ldots,n\}$ is the ordered set of pixels, and $n=H\cdot W$ is the fixed total number of pixels. Thus, we can associate the space of images with the hypercube $[0,1]^n \coloneqq [0,1]\times\cdots \times [0,1]$. In other words, for us, an image $\bx$ is an element of the hypercube $[0,1]^n$, 
$$
\bx = (\bx(1), \ldots, \bx(n)) \in [0,1]^n\subset \R^n.
$$
We use the standard notation $\|\cdot\|_p$  for the $\ell^p$-norms, $p\in [1,\infty]$.  That is, $\|\bx\|_p \coloneqq (\sum_{i=1}^n |\bx(i)|^p)^{1/p}$ for $p<\infty$ and 
$\|\bx\|_\infty \coloneqq \max_{i=1,\ldots,n} |\bx(i)|$ for $p=\infty$.

As previously introduced, we denote as $\xP$, $\xL$, and $\xH$ the pre-dose, low-dose, and high-dose images, respectively.  The term ``high-dose" denotes a standard reference level established either by a regulatory body or by laboratory protocols. Conversely, ``low-dose" refers to instances where a reduced quantity of CA is administered to the patient, while ``pre-dose" indicates the image acquired prior to any CA injections. These images can be acquired with a tomographic procedure or the MRI technique and then reconstructed as gray-scale images. We denote the digitally-simulated high-dose image as $\xHsim$, representing the target of the CAR imaging problem.

We can reasonably assume that the space of our images generated for a specific medical application belongs to a subset \(\mathcal{X}\) of the space of grayscale images, that is, \(\mathcal{X} \subset [0,1]^n\). 

At last, we assume that there exists a natural phenomenon \(\mathcal{F} \colon \mathcal{X} \to \mathcal{X}\) that decreases the CA concentration. However, this phenomenon is unknown or impossible to model precisely due to the large number of variables and hidden mechanisms involved.

\subsection{NN operators for learning an inverse problem}\label{ssec:NN}

From a mathematical point of view, a neural network ~$\Psi$ is a chain of $D$ compositions of affine transformations and possibly non-linear activation functions, and it can be compactly defined as:
\begin{equation*}
    \Psi(\bx) \coloneqq \sigma_D\biggl( W_D \Bigl(\dots \bigl(\sigma_1( W_1\bx + \boldsymbol{b}_1 ) \bigr) \dots  \Bigr) + \boldsymbol{b}_D \biggr)
\end{equation*}
where  $W_k$ and $\boldsymbol{b}_k$ are matrices and bias vectors, respectively,  defining the set of  parameters $\{W_{k}, \boldsymbol{b}_{k}\}$,  and $\sigma_k$ are possibly non-linear functions. All modern architectures applied in image processing tasks generate continuous NN operators. 

The overall combination of $W_k, \boldsymbol{b}_k$ and $\sigma_k$, makes the architecture of a NN.
The gist of a NN is that a subset $\Theta = \{W_{k'}, \boldsymbol{b}_{k'}\}$ of the parameter set is free and \emph{trained} by minimizing a loss function $\mathcal{L}$ over a collection of N input-output pairs $(\bx^i_1, \boldsymbol{x}^i_2)_i$ for $i = 1, \dots N$, that is:
\begin{equation}\label{eq:genericLoss}
\Theta_t = \underset{\Theta}{\argmin}\left\{\sum_{i=1}^N \mathcal{L}(\Psi(\bx_1^i), \bx^i_2)\right\}.
\end{equation}
Once the training phase is completed, $\Theta_t$ remains fixed, and so does $\Psi$.  For a basic overview of NNs we refer to \cite{bishop2024deep}.

The widely used $\lowhigh$ operators are trained using couples $(\xL^i, \xH^i)_i$ to learn the (pseudo) inverse $\mathcal{F}^\dagger$ function.


Still exploiting NN operators, our method introduces a change of perspective from the paradigm in the state-of-the-art literature, as we aim to learn the $\mathcal{F}$ phenomenon. 

In fact, our first task is to find the operator $\highlow\colon \mathcal{X} \to~\R^n_+$ introduced in Equation \eqref{eq:CAROperator}, transitioning from a signal with higher discontinuity (high-dose image) to a smoother one (low-dose image).
It is expected to be more well-behaved and stable than $\lowhigh$.\\
To obtain it, we train $\highlow$ using the available high-dose data as input $\bx^i_1$ images and the corresponding low-dose images as target $\bx^i_2$, in the loss function \eqref{eq:genericLoss}.
The trained operator thus generates a simulated low-dose image $\xLsim$, given the high-dose one: 
\begin{equation*}
    \xLsim \coloneqq \highlow(\xH).
\end{equation*}
The $\highlow$ network, mimicking $\mathcal{F}$, represents the LIP forward operator. 

The presence of the forward operator allows the CAR reformulation as the inverse problem stated in \eqref{eq:LIP}. 
Indeed, our second task is the computation of the high-dose simulated image $\xHsim$, defined as a solution of the $\highlow$-associated inverse problem, i.e.:
\begin{equation*}
    \xHsim \quad \mbox{such that} \quad \highlow(\xHsim) = \xL.
\end{equation*}

We remark that we also consider the LIP forward neural network
$\Psi_{\textnormal{PH2L}}$, defined as:
\begin{equation}\label{eq:CAROperator_withP}
    \Psi_{\textnormal{PH2L}}: (\xP, \xH) \longmapsto \xL,
\end{equation}
inspired by the operator in \eqref{eq:EndToEnd_withP}.
It can be used in place of $\highlow$ in Equation \eqref{eq:LIP}, when the no CA-enhanced signal $\xP$ is available. 
For the sake of clarity, we will henceforth omit explicit consideration of this case in the remainder of this section, as the associated modifications to the LIP-CAR framework are trivial in nature from the baseline model using the \(\highlow\) operator.

\subsection{The regularized LIP-CAR model}\label{ssec:regLIP}

It is known in the literature that solving an inverse problem through an optimization reformulation can offer advantages in terms of efficiency, flexibility, and robustness compared to direct inversion methods.
Formally, we substitute the original operator $\highlow$ with a family $\{\highlowalpha\}_{\alpha\in \R{+}}$ of stable operators $\highlowalpha \colon \mathcal{X} \to \mathcal{Y} \supseteq \mathcal{X}$  such that point-wise converge to the (pseudo) inverse of $\highlow$, as $\alpha$ goes to zero. See for example~\cite{Engl1996}.

We thus reformulate the LIP  problem~\eqref{eq:LIP} with the regularized optimization problem:
\begin{equation}\label{eq:LIPCAR}
\xHsim \in \underset{\bx \in \mathcal{X}}{\argmin} \left\{ \frac{1}{2} \|\highlow(\bx) - \xL^\delta \|_2^2 + \alpha \mathcal{R}(\bx) \right\}
\end{equation}
which defines our LIP-CAR model.
In this expression, the least square term forces the data fidelity of the simulated high-dose image, considering the presence of white intrinsic noise $\boldsymbol{\eta}^{in}$ (described in Equation \eqref{eq:noiseIn}).
The component $\mathcal{R}(\bx)$ serves as a regularization term, and the regularization parameter $\alpha >0$ balances the trade-off between data fidelity and the regularization effect. The regularization operator $\mathcal{R}$ is of crucial importance. In the limit (and unrealistic case), the perfect regularizer is given by:
\begin{equation*}
    \mathcal{R}(\bx) \coloneqq \begin{cases}
    0 & \mbox{if } \bx = \xH,\\
    +\infty & \mbox{otherwise}.
    \end{cases}
\end{equation*}
This suggests that, if we have access to a-priori information on the solution $\xH$, we can tailor a specific $\mathcal{R}$ to incorporate such information and penalize all the candidate solutions $\bx$ that do not present the features we aim to recover.

To set $\mathcal{R}$, we exploit the well-established regularization techniques that have already been extensively studied to offer a reliable approximation of a solution to the LIP inverse problem~\eqref{eq:LIP}. This grants robustness and explainability to our approach, as we show in the Section~\ref{ssec:conv}.

For the sake of simplicity and to keep this paper self-contained, in the numerical experiments of Section~\ref{sec:num_results}, we focus on a generalized Tikhonov $\ell^2-\mathcal{R}$ formulation. 
We consider the $\ell_1$-Total Variation (TV) norm:
\begin{equation}\label{eq:TV}
    \mathcal{R}(\bx) = \|\nabla \bx\|_1
\end{equation}
where $\nabla$ denotes the gradient operator. The TV prior encourages sparsity in image gradients, which leads to sharper and more well-defined edges in the reconstructed images, improving the visual detection of medical details and aiding clinicians in identifying anatomical structures and abnormalities more accurately.\\

In order to exploit a-priori information, in Section~\ref{sec:num_results} we also consider the Generalized $\ell_1 $-Total Variation (GenTV) regularizer, which ensures that the solution image closely resembles a possibly good guess of $\xHsim$,  in terms of gradient transforms. Denoting that guess as $\overline{\xH}$, GenTV reads:
\begin{equation}\label{eq:GenTV}
    \mathcal{R}(\bx) = \|\nabla \left( \bx - \overline{\xH} \right)\|_1.
\end{equation}
In our scenario, GenTV can easily take advantage of the end-to-end networks, and $\overline{\xH}$ can be computed as $\lowhigh(\xL)$. \\
We remark that our selection of the regularization function is merely illustrative, since the  LIP-CAR approach lets us accommodate models other than  \eqref{eq:LIPCAR} for regularizing the inverse problem. 

At last, we highlight that the minimization problem \eqref{eq:LIPCAR} must be solved with an iterative procedure to be set according to the mathematical properties of the regularization function.

\subsection{Convergence analysis of LIP-CAR}\label{ssec:conv}


We now prove that our LIP-CAR model \eqref{eq:LIPCAR} is well-posed and convergent; that is, it admits solutions that continuously depend on the data and converge to the true high-dose image $\xH$ in the limit of infinite NN parameters and zero noise. In particular, this section explicitly shows the dependence of the LIP forward operator $\highlow$ on the set $\Theta$ of trainable parameters introduced in Section \ref{ssec:NN}. To make our analysis more general, we do not prescribe a fixed $\mathcal{R}$.

We enumerate the final output of each training phase by $\Theta_{t,k}$, indicating that $\Theta_{t,k}$ is the set of free parameters fixed at the end of the $k$-th training, $k\geq 1$, as briefly discussed in Section \ref{ssec:NN}. Each training can differ from the others in many ways, such as the number of input-output pairs, the optimizer, and the training hyperparameters, to name a few. We do not discuss these differences in-depth. Let us just note that, even if the overall architecture of the NN is fixed, the number of parameters is allowed to increase, that is, $ \# \Theta_{t,k} \leq \# \Theta_{t,k+1}$. This increase reflects the idea that the depth of the NN (i.e. the number of layers) can be enhanced to achieve better accuracy.

For each $\Theta_{t,k}$, we have the corresponding trained LIP forward operator:
$$
\highlowk \colon \mathcal{X} \to \R^n_+.
$$
We need two batches of hypotheses. The first batch is as follows:

\begin{enumerate}[label=($\mathcal{H}$\arabic*)]
\item\label{h1} The set of (medical) images $\mathcal{X}$ is closed and regular; 
\item\label{h2} $\mathcal{F} \colon \mathcal{X} \to \mathcal{X}$ is bijective and continuous;
\item\label{h3} $\sup_{\bx\in \mathcal{X}}\|\highlowk(\bx) - \mathcal{F}(\bx)\|_\infty\leq M_k\to 0$ as $k\to \infty$.
\end{enumerate}

These hypotheses are very plausible. Hypothesis \ref{h1} is technical, and it serves only to avoid unrealistic pathological cases in real-world applications. Regularity means that the boundary of $\mathcal{X}$ is a set of zero Lebesgue measure, and it is safe to assume that every accumulation point of $\mathcal{X}$ is still a medical image.  Regarding \ref{h2}, the unknown operator $\mathcal{F}$ is assumed to have an inverse, or a pseudo-inverse, $\mathcal{F}^\dagger$, which represents the anti-diffusion of the CA concentration and maps low-doses images to high-dose images. Moreover, we can assume that each of the $n$-th components of $\mathcal{F}=(\mathcal{F}_1,\ldots,\mathcal{F}_n)$ are Lebesgue-measurable functions. Therefore, by Lusin-type theorems, it is possible to identify $\mathcal{F}$  with a continuous function outside a set of arbitrarily small Lebesgue measures.

Regarding the third hypothesis \ref{h3}, we recall that NNs possess universal approximation properties; see \cite{hornik1989multilayer} and especially \cite[Theorem 1]{zhou2020universality}. Thus, it is reasonable to assume that the sequence $\{\highlowk\}_{k\in\N}$ converges uniformly to $\mathcal{F}$, provided the right classes of architectures are chosen, and the training phases are sufficiently accurate.

We are in the position to begin our analysis.
\begin{definition}[Solutions]\label{def:solution}
Given $\xL \in \mathcal{X}$, we call $\xH$ the unique solution of:
\begin{equation*}\label{eq:LIP2}
\mathcal{F}(\bx) = \xL.
\end{equation*}
An element $\bx_k^\dagger \in \mathcal{X}$ is an $\mathcal{R}$-minimizing solution of the LIP problem~\eqref{eq:LIP}, in the limit case $\delta=0$, if $\highlowk(\bx_k^\dagger)=  \xL$ and:
\begin{equation*}
\mathcal{R}(\bx_k^\dagger) = \min \left\{\mathcal{R}(\bx) \mid \bx \in \mathcal{X},\; \highlowk(\bx)=  \xL \right\}.
\end{equation*}

\end{definition}

We introduce now the second batch of hypotheses:
\begin{enumerate}[label=($\mathcal{H}$\arabic*), resume]   
     \item\label{h4} $\highlowk$ is continuous for every $k$;
     \item\label{h5} $\highlowk(\mathcal{X})\supseteq \mathcal{F}(\mathcal{X})$;
    \item\label{h6} $\mathcal{R} \colon \mathcal{X} \to [0,+\infty)$ is  continuous.
\end{enumerate}
Notice that Hypothesis \ref{h4} is verified for every NN architecture used in applications since all the activation functions in the inference phase are typically component-wise Lipschitz continuous. Hypothesis \ref{h5} is instead a natural consequence of \ref{h3}.

In the next results, we provide well-posedness, stability, and convergence properties. The proofs are mostly standard, but for the convenience of the reader we highlight the key-points. We implicitly assume the validity of all the hypotheses \ref{h1}-\ref{h6} so far introduced.

Now, we define:
\begin{equation*}
\begin{aligned}
    &\Gamma_{k,i} (\bx) \coloneqq \frac{1}{2} \|\highlowk(\bx) - \xL^{\delta_{k,i}} \|_2^2 + \alpha_{k,i} \mathcal{R}(\bx), \\
    &\Gamma_{k} (\bx) \coloneqq \frac{1}{2} \|\highlowk(\bx) - \xL^{\delta_{k}} \|_2^2 + \alpha_{k} \mathcal{R}(\bx).
\end{aligned}
\end{equation*}
\begin{lemma}[Well-posedness]\label{lem:well-posedness}
    For every $k\in \N$ there exists an $\mathcal{R}$-minimizing solution $\bx_k^\dagger$, and the sets $\underset{\bx \in \mathcal{X}}{\argmin}\{\Gamma_{k,i}(\bx)\}$ and $\underset{\bx \in \mathcal{X}}{\argmin}\{\Gamma_{k}(\bx)\}$  are non-empty.
\end{lemma}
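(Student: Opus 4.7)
The plan is to invoke the direct method of the calculus of variations, which here reduces to Weierstrass's extreme value theorem since the search space sits inside the bounded hypercube $[0,1]^n$. The first observation is that $\mathcal{X}\subset [0,1]^n$ is bounded by construction and closed by hypothesis \ref{h1}, hence compact in $\R^n$. This single fact will power all three existence claims.

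For the $\mathcal{R}$-minimizing solution $\bx_k^\dagger$, I would first show that the feasible set
\begin{equation*}
S_k \coloneqq \{\bx \in \mathcal{X} \mid \highlowk(\bx) = \xL\}
\end{equation*}
is non-empty and compact. Non-emptiness chains together hypotheses \ref{h2} and \ref{h5}: by bijectivity of $\mathcal{F}$ on $\mathcal{X}$ we have $\xL \in \mathcal{F}(\mathcal{X})$, and then \ref{h5} gives $\xL \in \highlowk(\mathcal{X})$. Compactness follows because $S_k$ is the intersection of the compact set $\mathcal{X}$ with the preimage of the closed singleton $\{\xL\}$ under the continuous map $\highlowk$ (hypothesis \ref{h4}). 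Since $\mathcal{R}$ is continuous on the compact set $S_k$ by \ref{h6}, it attains its minimum there, yielding $\bx_k^\dagger$.

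For the non-emptiness of the two argmin sets, I would note that each of $\Gamma_{k,i}$ and $\Gamma_k$ is the sum of the squared $\ell^2$-residual pre-composed with the continuous $\highlowk$ and the continuous regularizer $\mathcal{R}$, hence continuous on $\mathcal{X}$. Weierstrass's theorem applied to a continuous real-valued function on a compact set then gives the existence of minimizers for any fixed choice of $\delta_{k,i},\alpha_{k,i}$ and $\delta_k,\alpha_k$.

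The main obstacle, if one can call it that, is purely bookkeeping: making sure each hypothesis is invoked correctly (continuity from \ref{h4} and \ref{h6}, bijectivity from \ref{h2} to guarantee $\xL \in \mathcal{F}(\mathcal{X})$, and the range inclusion \ref{h5} to transfer from $\mathcal{F}$ to $\highlowk$). No subtle topological issues arise because the ambient space is finite-dimensional and $\mathcal{X}$ is already compact, so neither a coercivity argument nor weak compactness is required; the heavier machinery will be needed only in the stability and convergence results that follow.
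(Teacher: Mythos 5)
Your proof is correct and follows essentially the same route as the paper's: non-emptiness and compactness of the feasible set $S$ followed by Weierstrass for $\mathcal{R}$, and continuity plus compactness of $\mathcal{X}$ followed by Weierstrass for $\Gamma_{k,i}$ and $\Gamma_k$. The paper merely states these facts without unpacking them, whereas you have correctly traced each one back to the hypotheses \ref{h1}, \ref{h2}, \ref{h4}, \ref{h5}, \ref{h6}.
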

\begin{proof}

The subset $S \coloneqq \left\{\bx \in \mathcal{X} \mid \highlowk(\bx) = \xL \right\}$
is non-empty and compact. Thus, $\mathcal{R}_{|S}$ admits a minimizer.

In the same way, by continuity and compactness, $\Gamma_{k,i}$ and $\Gamma_{k}$ achieve a minimum on  $\mathcal{X}$.
\end{proof}

\begin{theorem}[Stability]\label{thm:stability}
Set a sequence $\delta_{k,i}$ of  noise intensities such that $\lim_i \delta_{k,i} = \delta_k >0$ for every $k$, and fix $\alpha_{k,i}=\alpha_k>0$ for every $i$. \\
Then any sequence $\{\bx_{\textnormal{H},k,i}^{\textnormal{sim}}\}_{i\in\N}$ of elements $\bx_{\textnormal{H},k,i}^{\textnormal{sim}}\in \underset{\bx \in \mathcal{X}}{\argmin}\{\Gamma_{k,i}(\bx)\}$ has a convergent subsequence such that:
$$\lim_{i'}\bx_{\textnormal{H},k,i'}^{\textnormal{sim}}= \xHsimk \in \underset{\bx \in \mathcal{X}}{\argmin}\{\Gamma_{k}(\bx)\}.$$
If  $\xHsimk$ is unique, then the whole sequence 
$\bx_{\textnormal{H},k,i}^{\textnormal{sim}}$ converges to $\xHsimk$.
\end{theorem}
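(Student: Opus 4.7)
The plan is to follow the standard compactness-plus-continuity template used in variational regularization theory (cf.\ Engl--Hanke--Neubauer), adapted to the finite-dimensional setting at hand. The key structural facts are that $\mathcal{X} \subset [0,1]^n$ is closed and bounded, hence compact by \ref{h1}; that $\highlowk$ and $\mathcal{R}$ are continuous by \ref{h4} and \ref{h6}; and that the noisy data $\xL^{\delta_{k,i}}$ is assumed to converge to a limit $\xL^{\delta_{k}}$ consistent with the stated convergence $\delta_{k,i} \to \delta_k$ of noise intensities. With these in hand, the argument reduces to justifying a clean passage to the limit in the defining inequalities of the minimizers.

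First, I would use the compactness of $\mathcal{X}$ to extract from $\{\xHsimki\}_{i\in\N}$ a convergent subsequence $\xHsimki \to \xHsimk \in \mathcal{X}$ (relabeling the index $i'$ as $i$ for brevity). Then, for every test point $\bx \in \mathcal{X}$, the minimality property from Lemma~\ref{lem:well-posedness} gives the inequality
\begin{equation*}
    \tfrac{1}{2}\|\highlowk(\xHsimki) - \xL^{\delta_{k,i}}\|_2^2 + \alpha_k \mathcal{R}(\xHsimki)
    \;\leq\; \tfrac{1}{2}\|\highlowk(\bx) - \xL^{\delta_{k,i}}\|_2^2 + \alpha_k \mathcal{R}(\bx).
\end{equation*}
By continuity of $\highlowk$ and $\mathcal{R}$, together with the convergence $\xHsimki \to \xHsimk$ and $\xL^{\delta_{k,i}} \to \xL^{\delta_k}$, both sides converge as $i \to \infty$, yielding $\Gamma_k(\xHsimk) \leq \Gamma_k(\bx)$ for every $\bx \in \mathcal{X}$. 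Hence $\xHsimk \in \argmin_{\bx \in \mathcal{X}} \Gamma_k(\bx)$, which is the first claim.

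For the uniqueness addendum, I would invoke the classical subsequence-of-subsequence trick: if the whole sequence $\{\xHsimki\}_i$ did not converge to the unique minimizer $\xHsimk$, there would exist $\varepsilon>0$ and a subsequence staying outside the $\varepsilon$-ball around $\xHsimk$; but by compactness this subsequence itself admits a further convergent sub-subsequence, whose limit by the previous paragraph must again be a minimizer of $\Gamma_k$, hence equal to $\xHsimk$ by uniqueness, contradicting the separation by $\varepsilon$.

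I do not expect any serious obstacle: the finite-dimensional hypercube setting trivializes weak-versus-strong topology issues that complicate the infinite-dimensional version of this theorem, and the continuity assumptions \ref{h4}--\ref{h6} make the limit passage routine. The only subtle point to state carefully is the implicit assumption that the noisy inputs $\xL^{\delta_{k,i}}$ themselves converge (and not merely their noise bounds), which is the usual convention in stability statements of this kind and should either be stated explicitly or inferred from the context of the theorem.
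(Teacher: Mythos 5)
Your proposal is correct and follows essentially the same route as the paper's proof: compactness of $\mathcal{X}$ to extract a convergent subsequence, passage to the limit in the minimality inequality $\Gamma_{k,i}(\xHsimki)\leq\Gamma_{k,i}(\bx)$ via continuity of $\highlowk$ and $\mathcal{R}$, and the subsequence-of-subsequence argument for the uniqueness addendum. Your closing remark that the convergence $\xL^{\delta_{k,i}}\to\xL^{\delta_k}$ of the data (not merely of the noise bounds) is needed and only implicit is a fair and worthwhile observation, since the paper relies on exactly that when writing $\lim_{i'}\Gamma_{k,i'}(\bx)=\Gamma_k(\bx)$.
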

\begin{proof}
By Lemma \ref{lem:well-posedness}, the sequence $\{\bx_{\textnormal{H},k,i}^{\textnormal{sim}}\}_{i\in\N}$ is well-posed. By compactness there exists a convergent subsequence $\lim_{i'}\bx_{\textnormal{H},k,i'}^{\textnormal{sim}}= \xHsimk$. By continuity and the definition of $\bx_{\textnormal{H},k,i'}^{\textnormal{sim}}$,
$$
\Gamma_k(\xHsimk) = \lim_{i'}\Gamma_{k,i'}(\bx_{\textnormal{H},k,i'}^{\textnormal{sim}})\leq \lim_{i'}\Gamma_{k,i'}(\bx) = \Gamma_{k}(\bx)
$$
for any $\bx\in\mathcal{X}$. Thus, $\xHsimk \in \underset{\bx \in \mathcal{X}}{\argmin}\{\Gamma_{k}(\bx)\}$. If $\xHsimk$ is unique, then every subsequence of $\{\bx_{\textnormal{H},k,i}^{\textnormal{sim}}\}_{i\in\N}$ has a convergent sub-subsequence to $\xHsimk$. By a standard topological argument, $\lim_i\bx_{\textnormal{H},k,i}^{\textnormal{sim}}= \xHsimk$.
\end{proof}

\begin{theorem}[Convergence]\label{thm:convergence}
Let $\xL\in\mathcal{X}$ be fixed and $\xH$ be the unique solution of $\mathcal{F}(\bx)=\xL$. Let $\{\delta_{k,i}\}_{k,i\in\N}$ be any sequence of strictly positive noise intensities such that $\lim_i \delta_{k,i} =0$ for every $k$.  Chose $\alpha_{k,i} \colon (0,+\infty) \to (0,+\infty)$ such that $\lim_i\alpha_{k,i} = 0$ and  $\lim_i \delta^2_{k,i}/\alpha_{k,i} = 0$ for every $k$.

Then any sequence  of elements $\xHsimki\in \underset{\bx \in \mathcal{X}}{\argmin}\{\Gamma_{k,i}(\bx)\}$ admits a subsequence  such that $\lim_{i'}\bx_{\textnormal{H},k,i'}^{\textnormal{sim}}=\bx_k^\dagger $.  If $\bx_k^\dagger$ is unique, then $\lim_i\xHsimki~=~\bx_k^\dagger$. Moreover, if $\lim_k\delta_{k,k} = \lim_k\alpha_{k,k} = 0$, then $\lim_k\bx_{\textnormal{H},k,k}^{\textnormal{sim}} = \xH$.
\end{theorem}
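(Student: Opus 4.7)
The plan is to treat this as a standard three-layer regularization convergence result, handling the three claims in sequence: (i) subsequential convergence of $\xHsimki$ to an $\mathcal{R}$-minimizing solution $\bx_k^\dagger$ for fixed $k$; (ii) full convergence under uniqueness; (iii) convergence of the diagonal $\bx_{\textnormal{H},k,k}^{\textnormal{sim}}$ to the true high-dose image $\xH$. The unifying tool is the minimization inequality $\Gamma_{k,i}(\xHsimki)\leq \Gamma_{k,i}(\bx)$ applied to carefully chosen comparison points, combined with the compactness coming from \ref{h1}, the continuity coming from \ref{h4} and \ref{h6}, and the uniform approximation property \ref{h3}.

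For part (i), I would fix $k$ and compare with $\bx_k^\dagger$, which exists by Lemma \ref{lem:well-posedness}. Since $\highlowk(\bx_k^\dagger)=\xL$ and $\|\xL-\xL^{\delta_{k,i}}\|_2\leq \delta_{k,i}$, the minimization inequality gives
\begin{equation*}
\tfrac{1}{2}\|\highlowk(\xHsimki)-\xL^{\delta_{k,i}}\|_2^2+\alpha_{k,i}\mathcal{R}(\xHsimki)\leq \tfrac{1}{2}\delta_{k,i}^2+\alpha_{k,i}\mathcal{R}(\bx_k^\dagger).
\end{equation*}
Dividing by $\alpha_{k,i}$ and using $\delta_{k,i}^2/\alpha_{k,i}\to 0$ yields $\limsup_i \mathcal{R}(\xHsimki)\leq \mathcal{R}(\bx_k^\dagger)$; dropping the $\mathcal{R}$ term and using $\alpha_{k,i}\to 0$ yields $\highlowk(\xHsimki)\to \xL$. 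Compactness of $\mathcal{X}$ extracts a subsequence $\xHsimki'\to\bar\bx$, continuity of $\highlowk$ gives $\highlowk(\bar\bx)=\xL$, and continuity of $\mathcal{R}$ upgrades the $\limsup$ bound to $\mathcal{R}(\bar\bx)\leq \mathcal{R}(\bx_k^\dagger)$, making $\bar\bx$ an $\mathcal{R}$-minimizing solution. Part (ii) then follows from the same subsequence-of-subsequence topological argument already used in the proof of Theorem \ref{thm:stability}.

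For part (iii), the key idea is to use $\xH$ itself as the comparison point in the minimization inequality for $\Gamma_{k,k}$. The novelty compared to part (i) is that $\xH$ is \emph{not} exactly a zero of $\highlowk$, only of $\mathcal{F}$, so I would control the defect via \ref{h3}:
\begin{equation*}
\|\highlowk(\xH)-\xL^{\delta_{k,k}}\|_2\leq \|\highlowk(\xH)-\mathcal{F}(\xH)\|_2+\|\xL-\xL^{\delta_{k,k}}\|_2\leq \sqrt{n}\,M_k+\delta_{k,k},
\end{equation*}
which tends to zero by hypothesis. Feeding this into $\Gamma_{k,k}(\bx_{\textnormal{H},k,k}^{\textnormal{sim}})\leq \Gamma_{k,k}(\xH)$ and dividing by $\alpha_{k,k}$ gives both $\mathcal{R}(\bx_{\textnormal{H},k,k}^{\textnormal{sim}})\leq \mathcal{R}(\xH)+o(1)$ and $\highlowk(\bx_{\textnormal{H},k,k}^{\textnormal{sim}})\to\xL$. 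A second application of \ref{h3} then converts this into $\mathcal{F}(\bx_{\textnormal{H},k,k}^{\textnormal{sim}})\to \xL$, since $\|\mathcal{F}(\bx_{\textnormal{H},k,k}^{\textnormal{sim}})-\highlowk(\bx_{\textnormal{H},k,k}^{\textnormal{sim}})\|_2\leq \sqrt{n}\,M_k\to 0$. Any subsequential limit $\bar\bx$ of $\bx_{\textnormal{H},k,k}^{\textnormal{sim}}$ then satisfies $\mathcal{F}(\bar\bx)=\xL$ by continuity of $\mathcal{F}$ from \ref{h2}, hence $\bar\bx=\xH$ by bijectivity, and a final topological argument promotes this to convergence of the whole diagonal sequence.

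The main delicate point I anticipate is the double use of hypothesis \ref{h3} in part (iii): once to justify that $\xH$ is an \emph{approximate} feasible point for $\Gamma_{k,k}$, and once to transfer the residual estimate from $\highlowk$ back to $\mathcal{F}$ so that bijectivity of $\mathcal{F}$ (not of $\highlowk$) can be invoked to identify the limit. Everything else reduces to the textbook pattern for Tikhonov-type regularization with a continuous nonlinear forward operator on a compact domain.
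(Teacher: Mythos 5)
Your proposal is correct and follows essentially the same pattern as the paper: the standard Tikhonov argument for fixed $k$ (which the paper delegates to \cite[Theorem 3.26]{scherzer2009variational}), then the minimization inequality along the diagonal, the vanishing of the residual $\|\Psi_{\textnormal{H2L},k}(\bx_{\textnormal{H},k,k}^{\textnormal{sim}})-\xL^{\delta_{k,k}}\|_2$, the transfer to $\mathcal{F}$ via \ref{h3}, and identification of the limit through continuity, injectivity, and compactness. The one genuine (if minor) divergence is your choice of comparison point in the diagonal step: you test against $\xH$, which is only an approximate zero of $\Psi_{\textnormal{H2L},k}$, and pay for it with an extra invocation of \ref{h3} to bound the defect by $\sqrt{n}\,M_k$; the paper instead tests against the exact solution $\bx_k^\dagger$ of $\Psi_{\textnormal{H2L},k}(\bx)=\xL$, so that $\Gamma_{k,k}(\bx_k^\dagger)\leq \delta_{k,k}^2+\alpha_{k,k}\mathcal{R}(\bx_k^\dagger)$ holds directly, at the cost of needing $\mathcal{R}(\bx_{k}^\dagger)$ bounded (immediate from \ref{h1} and \ref{h6}). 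Both routes work. One small overreach: in part (iii) you claim $\mathcal{R}(\bx_{\textnormal{H},k,k}^{\textnormal{sim}})\leq \mathcal{R}(\xH)+o(1)$ by dividing by $\alpha_{k,k}$, but the diagonal hypothesis only assumes $\delta_{k,k},\alpha_{k,k}\to 0$ with no rate relation to $\delta_{k,k}^2$ or $M_k^2$, so that bound is not justified --- fortunately it is also not needed, since the conclusion only requires $\Gamma_{k,k}(\bx_{\textnormal{H},k,k}^{\textnormal{sim}})\to 0$, which follows without the division.
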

\begin{proof}
The first part is a straightforward application of standard techniques. See for example the proof in \cite[Theorem 3.26]{scherzer2009variational}. 
For every $k$, fix now an $\mathcal{R}$-minimizing solution $\bx_k^\dagger$. Then,
\begin{equation*}
 \Gamma_{k,k}(\bx_{\textnormal{H},k,k}^{\textnormal{sim}}) \leq \Gamma_{k,k}(\bx_k^\dagger)\leq  \delta_{k,k}^2 + \alpha_{k,k}\mathcal{R}(\bx_k^\dagger).
\end{equation*}
By passing to a subsequence, if necessary, we have that $\lim_{k'}\bx_{\textnormal{H},k',k'}^{\textnormal{sim}} = \bx^{*}$ and $\lim_{k'}\bx_{k'}^\dagger = \bx^{**}$, with $\bx^{*}, \bx^{**}\in \mathcal{X}$. Therefore:
$$
\lim_{k'}\Gamma_{k',k'}(\bx_{\textnormal{H},k',k'}^{\textnormal{sim}}) =0,
$$
and it follows that:
$$
\lim_{k'} \|\Psi_{\textnormal{H2L},k'}(\bx_{\textnormal{H},k',k'}^{\textnormal{sim}}) - \xL^{\delta_{k',k'}} \|_2 =0.
$$
Using now  the uniform convergence \ref{h3}, we get: 
$$
\xL=\lim_{k'}\Psi_{\textnormal{H2L},k'}(\bx_{\textnormal{H},k',k'}^{\textnormal{sim}}) = \mathcal{F}(\bx^*), 
$$
that is, $\bx^* = \xH$. Since $\xH$ is unique, by a standard topological argument, $\lim_k \bx_{\textnormal{H},k,k}^{\textnormal{sim}} = \xH$.
\end{proof}

\subsection{Robustness and stability of LIP-CAR}\label{ssec:robustness}
While powerful tools, NNs are typically susceptible to errors and vulnerabilities.
They learn to mimic the behavior of an operator through a statistical optimization process over a finite training set, but they must demonstrate efficiency and accuracy when presented with test data and unseen input. This property is typically referred to as ``robustness''.
It ensures that the network has effectively learned existing patterns rather than merely memorizing the training data. Therefore, evaluating our networks' performance on unseen data is crucial for assessing the true effectiveness of our proposed approach and its reliability for practical applications in CA imaging.

More precisely, referring to the wide literature about NN robustness, the ``generalization error'' for a trained network is typically defined as the difference between the performance of the network on the training data and its performance on unseen data from the same distribution. It quantifies how well the network can generalize its learned patterns to new, consistent examples (composing the so-called ``in-domain'' test set).
Experiments reported in Section~\ref{sec:num_results} will analyze the generalization ability of all the considered learned operators, by comparing their performance of training and testing data.

In addition, since the seminal paper \cite{goodfellow2014explaining} by Goodfellow et al., adversarial examples, crafted to deceive neural networks by making imperceptible or small changes to input data, have been used to highlight the vulnerability of neural networks on slightly inconsistent data. 
As anticipated in Section~\ref{sec:intro}, the presence of noise is unavoidable in medical applications, but intrinsic components as in \eqref{eq:noiseIn} are assumed to share similar properties over the entire data set, built under the same medical protocol. However, the presence of extra noise in a few patients' images represents a particularly significant and plausible perturbation that must be considered. An elevated noise level may result from many factors, including calibration issues or patient anomalies. In these cases, the unexpected perturbation is called ``out-of-domain'' noise, as it represents a statistically significant mismatch between training and test samples that can cause severe artifacts or even hallucinations on some output images.\\ 
Recalling numerical analysis classical theory, from now on, we call ``stability'' the robustness of an NN-based approach with respect to out-of-domain noise. 

\begin{remark}\label{rem:stability}
Theorem~\ref{thm:stability} in Section~\ref{ssec:conv} has already theoretically stated that, for every \(k\), the model~\eqref{eq:LIPCAR} is continuous and stable against extra perturbations \(\delta_{k,i} = \delta_k \pm \delta_i \neq \delta_k\). In Section \ref{ssec:results_IPRobustness}, we will assess the stability of the LIP-CAR approach numerically. 
These analyses are important when handling NNs since their performances can be heavily affected by generalization errors stemming from unseen data and types of perturbations different from the ones present in the training set. Assuming moreover that \(\mathcal{R}\) is convex, it is possible to control the rate of convergence of \(\bx_{\textnormal{H},k,i}^{\textnormal{sim}}\) towards \(\bx_k^\dagger\) using the Bregman distance \(D^\mathcal{R}_{\xi}(\cdot, \cdot)\), induced by \(\mathcal{R}\) in a fixed direction~\(\xi\), such that
$$
D^\mathcal{R}_{\xi}(\bx_{\textnormal{H},k,i}^{\textnormal{sim}},\bx_k^\dagger) \approx \delta_{k,i}.
$$
The proof is technical, so we omit it here. Interested readers can refer to \cite[Theorem 3.42]{scherzer2009variational} for details. The key point is that the regularization term \(\mathcal{R}\) controls and smooths out the perturbations in \(\bx_{\textnormal{H},k,i}^{\textnormal{sim}}\) caused by the extra noise \(\delta_{k,i}\). This presents another notable difference compared to the direct end-to-end method~\eqref{eq:EndToEnd}.
\end{remark}


\section{Experimental setup}\label{sec:exp_setup}
In this section we illustrate our experimental design, by describing the CA data set and our methodological and implementation choices, before analysing the numerical results in Section \ref{sec:num_results}.

\subsection{Data set}\label{ssec:setup_dataset}
We conduct our study on real contrast agent images from a pre-clinical trial. 
Procedures were conducted according to the national and international laws on experimental animal research (L.D. 26/2014; Directive 2010/63/EU) and under a specific Italian Ministerial Authorization (project research number 215/2020-PR), by CRB/Test Facility of Bracco Imaging SpA.
The original dataset is composed of 61 cranial MRI examinations of lab rats affected by an orthotopically induced C6 glioma. 
Specifically, each lab rat underwent an  MRI session composed of  three T1-weighted spin echo sequences (TR = 360 ms, TE = 5.51 ms, echo spacing = 5.514 ms, rare factor = 2, FOV = 32 mm x 32 mm, matrix size = 256 x 256, number of slice = 24, slice thickness 0.75 mm, number of averages = 4) respectively acquired before any administration (pre- CA), after administration of  0.01 mmol Gd/kg of a non-commercial high relaxivity dimeric gadolinium based CA (low-dose image), and after administration of additional 0.04 mmol Gd/kg of the same agent (a full-dose image).\\
Having MRI 3D data as stacks of 24 slices, we split the volumes into 2D images of 256x256 pixels. After cleaning the data set by removing slices with missing or poorly registered sequences, we randomly split the data into training and testing subsets.
We consider N = 840 images for training and N = 240 images for testing.


The presence of unavoidable intrinsic noise $\boldsymbol{\eta}^{in}$ on all the images made us consider pre-processing the data with a denoising tool before proceeding with training procedures. 
However, our analysis of noise statistics indicated that the high-dose images exhibited small perturbations on average, and we found it was not worth risking the loss of details and contrast reduction on the high-dose images. On the contrary, the low-dose slices were more significantly impacted by noise and the benefits of their denoising are reported and discussed in Section~\ref{ssec:results_dnn}.
In these tests, we opted for the BM3D filter~\cite{dabov2007image}, which is a block-matching and filtering method for denoising, and it represents a state-of-the-art tool that has been shown to be very effective in removing noise from images.

\subsection{Approaches and notations}\label{ssec:setup_notat}
As mentioned, the proposed LIP-CAR approach has a flexible design. 
For instance, we have considered both the LIP forward operators introduced in section \ref{ssec:NN}, i.e. $\lowhigh$ defined in Equation \eqref{eq:CAROperator} and $\lowhighp$ defined in Equation \eqref{eq:CAROperator_withP}. 
We have also exploited the modularity of the optimization problem statement and considered both the TV prior as in Equation \eqref{eq:TV} and the GenTV regularizer defined in Equation \eqref{eq:GenTV}.

We focus on the following three main learned inverse problems in the numerical experiments.\\
We denote as {\bf LIP-H2L-TV} and {\bf LIP-PH2L-TV} the cases where the Total Variation regularizer is considered and the forward operator is played by the $\highlow$ or $\highlowp$ networks, respectively. 
We remark that the  LIP-H2L-TV solver does not consider pre-dose images at all, hence it may be of interest for those cases where registered pre-dose and low-dose images are not available.\\
Then, the label  {\bf LIP-PH2L-GenTV} refers to the inverse problem solver where the forward operator is played by $\highlowp$ and the Generalized Total Variation regularizer is considered. In this case, GenTV relies on the $\overline{\xH}$ image computed by the $\lowhighp$ operator.
We observe that this implementation heavily exploits all the provided items and tools at best, by using the pre-dose images for training the LIP forward operator and using the images by the end-to-end approach inside the regularization term. 

Both the LIP forward operators and LIP-CAR framework are compared to the state-of-the-art, i.e. the trained networks $\lowhigh$ and $\lowhighp$. In the next section, where we compare all the computed $\xHsim$ images, we will use the notations {\bf NN-L2H} and {\bf NN-PL2H} to emphasize their reliance on the NN direct application \eqref{eq:EndToEnd}, in contrast to the LIP formulation.

\subsection{Metrics for quality assessment}\label{ssec:setup_metrics}
To quantify the performance of the considered approaches, we consider three quality assessment metrics:

\begin{enumerate}
\item Relative Error:
    \begin{equation*}
        \operatorname{RE}(\bx_1, \bx_2) \coloneqq \frac{\| \bx_1 - \bx_2 \|_2}{\|\bx_2\|_2}.
    \end{equation*}
    As the RE computes the $\ell_2$ distance between two images, the lower the RE, the better the performance of the NN.
\item Structural Similarity Index Measure (SSIM):  
    The $\operatorname{SSIM}(\bx_1, \bx_2)$ has been defined in \cite{wang2004image} to quantify the similarity of the $\bx_1$ image with respect to the reference $\bx_2$ image, capturing visual elements. The closer the SSIM is to 1, the better the performance of the NN.
\item Peak Signal-to-Noise Ratio:
    \begin{equation*}
         \operatorname{PSNR}(\bx_1, \bx_2) \coloneqq 20\log_{10}\left(\frac{\max_{i} | \bx_2(i) |}{\|\bx_1 - \bx_2 \|_2}  \right).
    \end{equation*}
    The higher the PSNR, the better the performance of the~NN.
\end{enumerate}
In addition to the assessment metrics, we will present images to examine the differences among methods and visually compare the contrast enhancements introduced onto the images.

\subsection{Network architecture and training}\label{ssec:setup_Architecture}

To lead fair comparisons, we always use the deep architecture considered in \cite{bone2021contrast} and in \cite{ammari2022can}, thus representing the state-of-the-art for the CAR imaging task.
In these manuscripts, the authors address the low-dose contrast agent problem with a variant of the U-net architecture, firstly introduced in \cite{ronneberger2015u} for bio-medical image segmentation, commonly referred to as V-Net. 
The reference architecture is thus a multiscale convolutional neural network, with strided convolutions instead of maxpooling layers for downsampling.
Our V-Net architecture has few differences from the original one, due to the different structure of the considered dataset. First of all, we consider an input layer with just low-dose/high-dose images, stacked with pre-dose images when available, differently to the original manuscript implementation \cite{bone2021contrast}, where the inputs were the concatenation of pre-dose, low-dose, T2-Flair, and diffusion images. Secondly, we modified the final sigmoid activation function with a ReLU to address stability issues in the training phase. 

When specified, the input low-dose images are substituted with their corresponding BM3D-filtered version.

In all our experiments, the NNs are trained for 80 epochs and batch size of $16$, with the Adam optimization algorithm \cite{kingma2014adam} and fixed step size of $10^{-3}$. They are trained to minimize the mean of the SSIM-based loss function $\operatorname{SSIM}(\cdot,\cdot)$ over the set of trainable parameters $\Theta$:
\begin{equation}\label{eq:loss}
    \operatorname{argmin}_\Theta \frac{1}{N}\sum_{i=1}^N \left(1 - \operatorname{SSIM}(\Psi(\bx^i_1),\bx^i_2)\right),
\end{equation}
where $\Psi$ is a generic NN with trainable parameters set $\Theta$, $N$ is the number of training data, $\bx^i_1$ are the inputs of the network and $\bx^i_2$ the target images.
 
All the training procedures were executed on an NVIDIA Quadro P6000 GPU card, requiring approximately 20 minutes for the whole training process.

\subsection{Specifications about the regularized model}\label{ssec:setup_regul}

To solve the regularized inverse problem, we have exploited the widely used Adam solver. 
We have always used the low-dose image as a starting iterate and run 150 iterations.

The regularization parameters $\alpha$ used in \eqref{eq:LIPCAR} have been chosen heuristically to better balance the visual appearance of the output and their metrics.
For the LIP-H2L-TV model, we have used $\alpha = 6e-3$, for the LIP-PH2L-TV  $\alpha = 3e-3$ while for the LIP-PH2L-GenTV $\alpha = 6e-3$.

To analyze the stability of the LIP-CAR approach in case of out-of-domain noise, we will alter the low-dose input image of the test set by adding a random perturbation $\boldsymbol{\eta}^{out}$ drawn from a Gaussian white distribution with fixed noise level. The corrupted test data thus reads:
\begin{equation}
    \bx_L^\delta + \boldsymbol{\eta}^{out}.
\end{equation}
In section \ref{ssec:results_IPRobustness} we consider several noise realizations with increasing intensities, up to 0.066 standard deviation. 
Due to the high perturbation, the regularization parameter $\alpha$ has been changed based on a logarithmic-spaced distribution in the interval $[3e-3, 0.01]$ for LIP-PH2L-TV approach, $[6e-3, 0.01]$ for the LIP-H2L-TV and LIP-PH2L-GenTV models. The networks are not re-trained.


\section{Numerical experiments}\label{sec:num_results}

In this section, we first compare the trained networks used as image-to-image operators, focusing on their accuracy performances and on their generalization ability.
Then we evaluate the proposed LIP-CAR approach on the in-domain test set in Section~\ref{ssec:results_IP} and analyze its stability with respect to extra out-of-domain noise in Section~\ref{ssec:results_IPRobustness}.\\

\subsection{Ablation study of image-to-image approaches}\label{ssec:results_dnn}

Here, we evaluate the performance of the NNs used as direct predictors, and we test the usefulness of the BM3D applied to the low-dose images.
In Table \ref{tab:end-to-end}, we report the mean and the standard deviation of the PSNR and SSIM values computed over the entire test set.
Recalling the definitions provided in Section~\ref{ssec:setup_metrics}, we highlight that the target image of the first four rows is the high-dose image, hence we have used $\bx_1 =~\Psi(\xL)$ and $\bx_2 = \xH$ (with the suitably trained networks $\Psi$). Conversely, in the last rows, we have set  $\bx_1 = \Psi(\xH)$ and $\bx_2 = \xL$.

\begin{table}[ht]
\begin{center}
\begin{tabular}{llcc}
\toprule
NN   & preprocessing on $\xL^\delta$ & PSNR & SSIM  \\
\midrule
\multirow{ 2}{*}{$\lowhigh$} & \textbf{None} &  \textbf{28.14 (0.833)}  & \textbf{0.855 (0.021)} \\
         &  BM3D & 28.79 (0.928)  & 0.859 (0.022)  \\
\cmidrule{2-4}
\multirow{ 2}{*}{$\lowhighp$}  & \textbf{None} & \textbf{28.89 (0.851)}  & \textbf{0.868 (0.022)}  \\
         & BM3D & 29.34 (0.907) & 0.866 (0.022)  \\
\midrule
\multirow{ 2}{*}{$\highlow$} & None & 31.92 (1.110) & 0.851 (0.025) \\
        & \textbf{BM3D} & \textbf{30.50 (0.939)}  & \textbf{0.854 (0.026)}  \\
\cmidrule{2-4}
\multirow{ 2}{*}{$\highlowp$} & None & 37.85 (0.885) & 0.919 (0.007)  \\
      & \textbf{BM3D} & \textbf{39.05 (1.073)} & \textbf{0.952 (0.007)}  \\
\bottomrule \vspace{1mm}
\end{tabular}
\caption{Performance comparison of the image-to-image operators. We report the metrics' mean and standard deviation (in brackets) on the full test set. We highlight in bold the operators used in the following experiments.}\label{tab:end-to-end}
\end{center}
\end{table}

Looking at Table~\ref{tab:end-to-end}, firstly, we observe that we always get high SSIM values, reflecting the capability of all the considered image-to-image operators to approach their targets. The very small standard deviation values further confirm this. \\
Secondly, we appreciate the benefits of the use of the pre-dose images, making the low-to-high approach increase of one percentage point in SSIM and the high-to-low one increase up to 10 points (0.952 of SSIM from 0.854) while strongly reducing the standard deviation (only 0.007 from 0.025). 
Using $\xP$ images also helps mitigate noise on the outputs, as the PSNR values increase in all the reported cases, up to 39.05 for the $\highlowp$ operator. \\ 
Thirdly, we highlight that learning the low-dose images produces the highest metrics, above all in terms of PSNR and independently on the use of the BM3D denoiser. This suggests that the LIP forward operators are simpler to learn than the end-to-end ones, as we had used very comparable training set-ups.\\
Lastly, we observe that using the BM3D to denoise the $\xL$ images has mainly improved the $\highlowp$ performance, whose PSNR gains more than one point and reaches its highest value. 
In the following, we will always consider the $\highlow$ and $\highlowp$  operators trained on denoised low-dose images, while we do not contemplate the BM3D anymore for $\lowhigh$ and $\lowhighp$. \\ 

\begin{figure}
    \centering
    \includegraphics[width=0.45\textwidth]{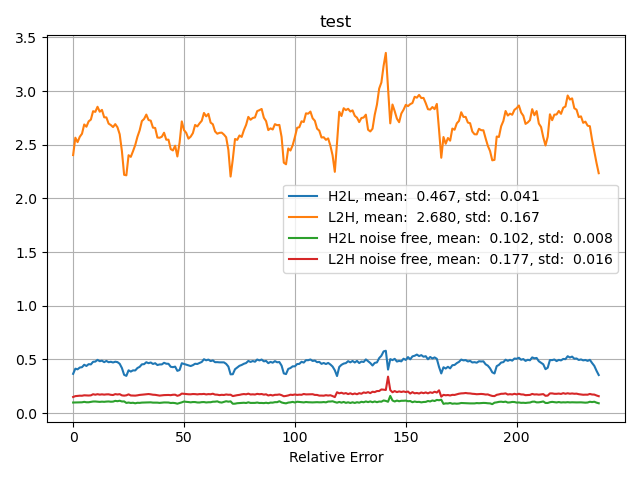}
    \includegraphics[width=0.45\textwidth]{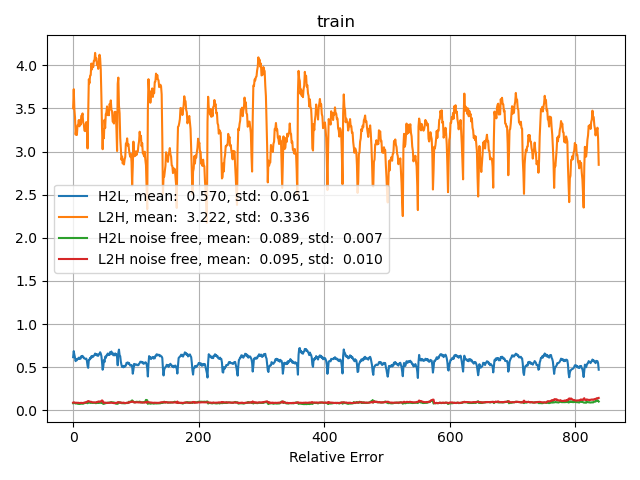}
\caption{Values of relative errors computed on each image of the test set (left) and train set (right), in case of in-domain input images and in case of out-of-domain noisy input images.}\label{fig:image-to-image}
\end{figure}

We need to check the robustness of the end-to-end and the LIP forward operators now.\\
As discussed in Section~\ref{ssec:robustness}, the network's capability to generalize from training data is an important factor to assess.
To estimate the generalization error due to the inherent differences between the statistical distribution of the training set (the only one considered for fine-tuning the learned parameters) and that of the test set, we carefully compare networks' performances on the two sets. 
In the top image of Figure~\ref{fig:image-to-image}, we plot the RE values across all the 240 test images, and in the bottom image, those across the 840 train images. 
Table \ref{tab:robustness_dnn} reports the corresponding mean and standard deviation values to quantify the different behaviors through averages.
We first focus on the in-domain cases, i.e., the actually acquired real data.
As visible, the overall trend does not change remarkably: the errors are always small, and those on the test data are only slightly higher than those on the training samples (above all for $\highlowp$ operator). In addition, there are extremely low variations from the means. We can conclude that the networks have not overfitted the training samples. \\
We also need to analyze the networks' stability with respect to additive out-of-domain noise $\boldsymbol{\eta}^{out}$, so we now apply the pre-trained $\highlowp$ and $\lowhighp$ operators onto input images corrupted by further Gaussian noise with standard deviation $\sigma=0.1$.
We have replicated the in-domain performance analysis on both sets, and the results are still in Figure~\ref{fig:image-to-image} and Table \ref{tab:robustness_dnn}.
As expected, the errors of the out-of-domain images are higher than the corresponding in-domain ones. However, if the $\highlowp$ approach (in red) still exhibits values around 0.5 with moderate fluctuations when applied on strongly noisy images, $\lowhighp$ (in green) presents higher values with wide oscillations. This empirically demonstrates the instability of the low-to-high approach and the superiority of the LIP forward operator for the considered added noise.

\begin{table}[ht]
\begin{center}
\begin{tabular}{llcc}
\toprule
    & NN  & in-domain & out-of-domain  \\
\midrule
\multirow{2}{*}{test set} & $\lowhighp$ & 0.177 (0.016) & 2.680 (0.167)  \\
    & $\highlowp$ & 0.102 (0.008) & 0.467 (0.041)  \\
 \midrule
\multirow{2}{*}{train set} & $\lowhighp$ & 0.095 (0.010) & 3.222 (0.336)  \\
 & $\highlowp$  & 0.089 (0.007) & 0.570 (0.061) \\
\bottomrule \vspace{1mm}
\end{tabular}
\caption{Mean and standard deviation (in brackets) of the relative errors computed on each image of the test and train sets, in case of in-domain input images and in case of out-of-domain noisy input images.}\label{tab:robustness_dnn}
\end{center}
\end{table}

\subsection{Accuracy of the LIP-CAR approach}\label{ssec:results_IP}

Now, we test the proposed LIP-CAR framework and compare its high-dose simulated images to the end-to-end generated ones. Here, we consider only the in-domain case, and we postpone the robustness analysis to Section~\ref{ssec:results_IPRobustness}.

\begin{figure*}
\centering
\begin{tabular}{ccccc}
    & $\bx_L$  & $\bx_H$  & NN-L2H  & LIP-H2L-TV \\
\rotatebox{90}{Image 39}&
\begin{tikzpicture}
    \node [anchor=south west, inner sep=0] (image) at (0,0) {\includegraphics[width=0.2\textwidth]{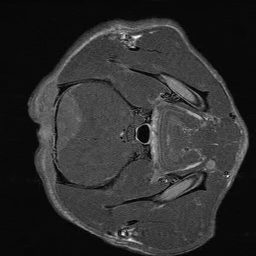}};
    \begin{scope}[x={(image.south east)}, y={(image.north west)}]
        \draw[red, thick] (0.25, 0.53) circle[radius=0.18];
    \end{scope}
\end{tikzpicture} &
\includegraphics[width=0.2\textwidth]{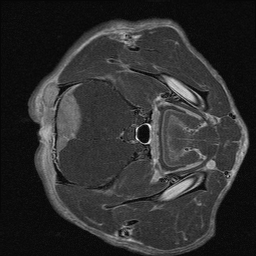} &
\includegraphics[width=0.2\textwidth]{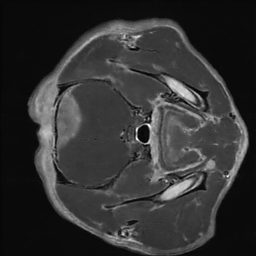} & 
\includegraphics[width=0.2\textwidth]{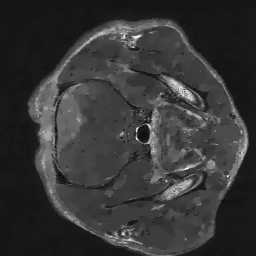} \\
\rotatebox{90}{Image 141} &
\begin{tikzpicture}
    \node [anchor=south west, inner sep=0] (image) at (0,0) {\includegraphics[width=0.2\textwidth]{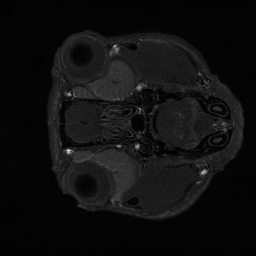}};
\end{tikzpicture} &
\includegraphics[width=0.2\textwidth]{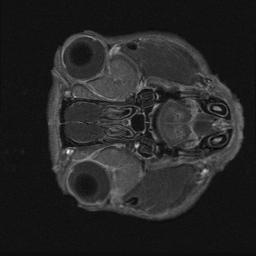}&
\includegraphics[width=0.2\textwidth]{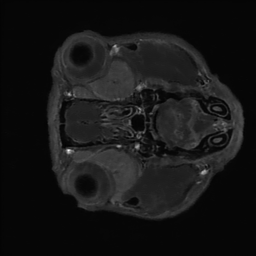}&
\includegraphics[width=0.2\textwidth]{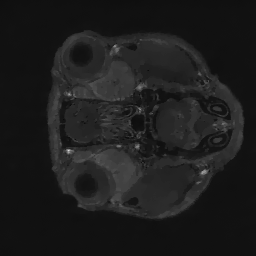}\\
\rotatebox{90}{Image 226} &
\includegraphics[width=0.2\textwidth]{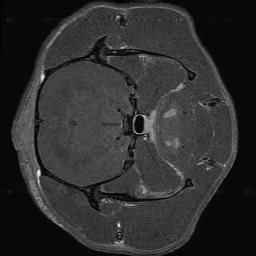}&
\includegraphics[width=0.2\textwidth]{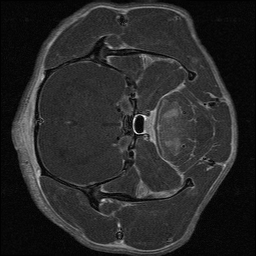}&
\includegraphics[width=0.2\textwidth]{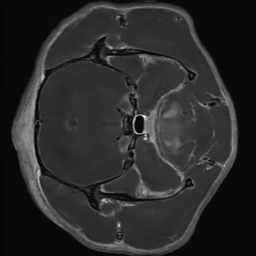}&
\includegraphics[width=0.2\textwidth]{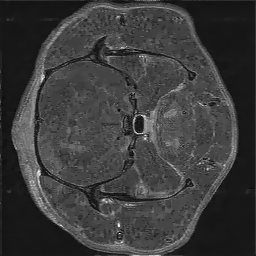}
\\
\end{tabular}
\caption{Results on test images number 39, 141 and 226, achieved from the low-dose $\bx_L$ images (first column) having the high-dose $\bx_H$ images (second column) as references: the NN-L2H framework based on the end-to-end  $\Phi_{L2H}$ operator (third column), the LIP-H2L-TV proposed method based on the forward $\Phi_{PH2L}$ operator and the TV prior \eqref{eq:TV} (last column).}
\label{fig:reco_L2H}
\end{figure*}
\begin{figure*}
\centering
\begin{tabular}{ccccc}
    & $\bx_P$  & NN-PL2H  & LIP-PH2L-TV  & LIP-PH2L-GenTV \\
\rotatebox{90}{Image 39} &
\includegraphics[width=0.2\textwidth]{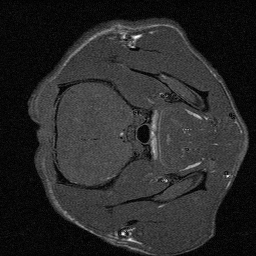}&
\includegraphics[width=0.2\textwidth]{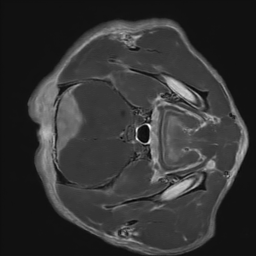}&
\includegraphics[width=0.2\textwidth]{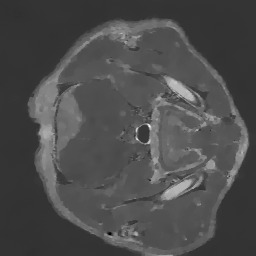}&
\includegraphics[width=0.2\textwidth]{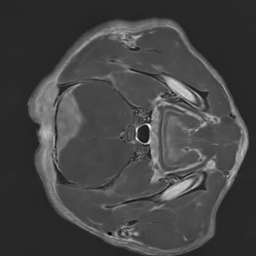}\\
\rotatebox{90}{Image 141} &
\includegraphics[width=0.2\textwidth]{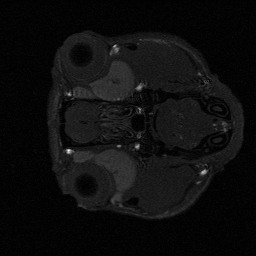}&
\includegraphics[width=0.2\textwidth]{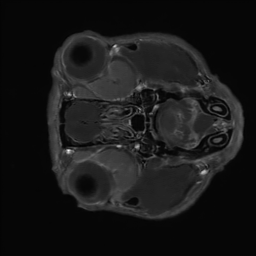}&
\includegraphics[width=0.2\textwidth]{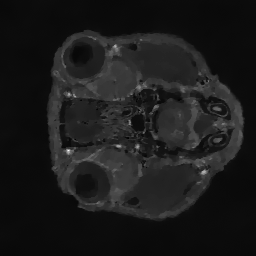}&
\includegraphics[width=0.2\textwidth]{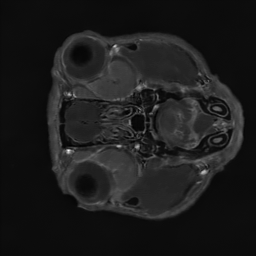}\\\
\rotatebox{90}{Image 226} &
\includegraphics[width=0.2\textwidth]{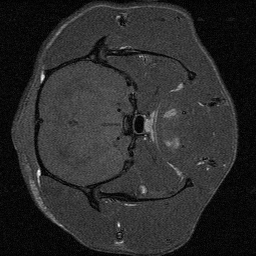}&
\includegraphics[width=0.2\textwidth]{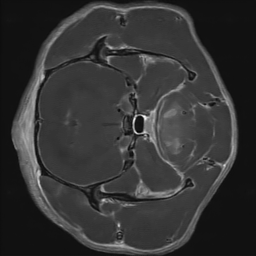}&
\includegraphics[width=0.2\textwidth]{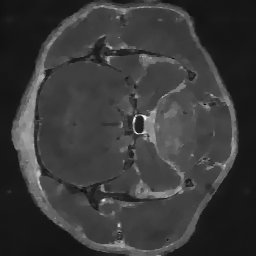}&
\includegraphics[width=0.2\textwidth]{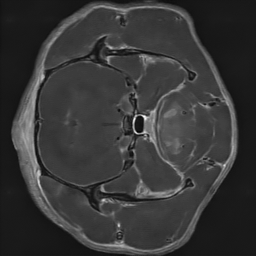}\\\
\end{tabular}
\caption{Results on test images number 39, 141 and 226, achieved when the pre-dose $\bx_P$ images (first column) are available: the NN-PL2H framework based on the end-to-end $\Phi_{PL2H}$ operator (second column), the LIP-PH2L-TV proposed method based on the forward $\Phi_{PH2L}$ operator and the TV prior \eqref{eq:TV} (third column), and the LIP-PH2L-GenTV proposed method based on the $\Phi_{PH2L}$ operator and the GenTV prior \eqref{eq:GenTV} (last column).}
\label{fig:reco_PL2H}
\end{figure*}

We start taking into consideration some reference test images in Figures \ref{fig:reco_L2H} and \ref{fig:reco_PL2H}.
Specifically, test image 39 depicts a slice with a large glioma; image 226 shares a similar shapes but does not contain a tumoral mass of interest, whereas image 141 shows a completely diverse head morphology including the olfactory bulb and the eyes of the mouse.

In Figure \ref{fig:reco_L2H}, we illustrate the basic scenario where only the $\xL$ and the $\xH$ data are available for training: in this case, only the NN-L2H and the LIP-H2L-TV methods are usable to simulate high-dose images.  
We immediately observe that both approaches are able to detect all the areas that need to be enhanced/de-enhanced, without adding any false enhancements/de-enhancements (potentially causing hallucinations and false diagnosis). However, the LIP-CAR solutions are slightly noisy in the background and darker than the images by NN-L2H. 

In Figure~\ref{fig:reco_PL2H}, we assume the pre-dose injection images $\xP$ are available, and we can add them as input data for the training. 
As visible, the pre-dose image 39 is very noisy and its tumoral mass is not evident, while it is perfectly enhanced with neat boundaries in all the simulated high-dose images.
In addition, all the computed images are well denoised, and they recovered fine details that were visible only in the corresponding high-dose slices (see, for instance, image 141).
This visual inspection suggests that the addition of the pre-dose images strongly improves the contrast among relevant anatomical areas and, again, no false positives are added.\\
Focusing on the regularizers used in the LIP approach (third and fourth columns in Figure \ref{fig:reco_PL2H}), we observe that the TV prior tends to smooth the reconstructions, while GenTV has remarkably taken advantage of the $\highlowp$ operator, as its images are bright well contrasted. 
However, our LIP approach overcomes NN-PL2H firmly, in terms of metrics. Table \ref{tab:MethodComparison} reports the SSIM values with respect to the high-dose target, for the pre-dose, the low-dose and all the simulated high-dose images. The values confirm that we always obtain reliable approximations of the reference high-dose images, as the starting SSIM values of the low-dose images increase remarkably, even with the naive LIP-H2L-TV approach. 
In some cases, the values of the NN approaches are slightly higher than the TV-based LIP approaches, but they do not achieve the highest metrics, which were always hit by LIP-PH2L-GenTV. 

\begin{table}[h]
	\begin{center}
		\begin{tabular}{lccc}
			\toprule
			Approach &  \multicolumn{3}{c}{SSIM values} \\
			& 39th image & 141th image & 226th image \\
			\midrule
			pre-dose        & 0.4118    & 0.3697      & 0.4512  \\
			low-dose        & 0.6627    & 0.6330     & 0.6821 \\
			NN-L2H          & 0.8646    & 0.8346     & 0.8680 \\
			NN-PL2H         & 0.8756    & 0.8488     & 0.8804 \\
			LIP-H2L-TV      & 0.8213    & 0.8376     & 0.8288 \\
			LIP-PH2L-TV     & 0.8713    & 0.8760     & 0.8690\\
			LIP-PH2L-GenTV  & 0.9007    & 0.8883     & 0.9020 \\
			\bottomrule \vspace{1mm}
		\end{tabular}\caption{SSIM values of the images reconstructed by the selected approaches, starting from the low-dose image, for three test images.}\label{tab:MethodComparison} 
	\end{center}
\end{table}

\begin{figure}
    \begin{subfigure}[t]{0.47\linewidth}
    \begin{tabular}{p{21mm} p{21mm} p{30mm}}
     \hspace{9mm}$\bx_H$  &  \hspace{3mm} NN-PL2H  & LIP-H2L-GenTV \\
    \begin{tikzpicture}
        \node [anchor=south west, inner sep=0] (image) at (0,0) {\includegraphics[width=0.3\textwidth]{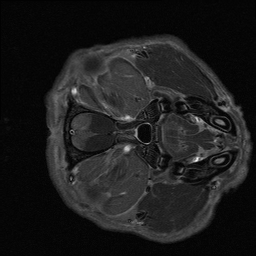}};
        \begin{scope}[x={(image.south east)}, y={(image.north west)}]
            \draw[red, thick] (0.32, 0.48) circle[radius=0.15];
        \end{scope}
    \end{tikzpicture} & 
    \includegraphics[width=0.3\textwidth]{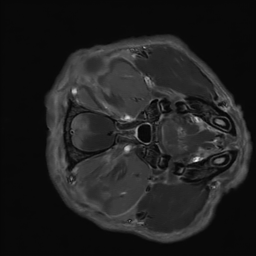}&
    \includegraphics[width=0.3\textwidth]{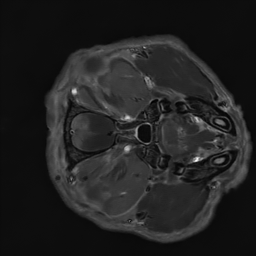}\\
    
    \begin{tikzpicture}
        \node [anchor=south west, inner sep=0] (image) at (0,0) {\includegraphics[width=0.3\textwidth]{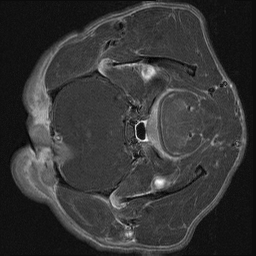}};
        \begin{scope}[x={(image.south east)}, y={(image.north west)}]
            \draw[red, thick] (0.24, 0.40) circle[radius=0.15];
        \end{scope}
    \end{tikzpicture} &
    \includegraphics[width=0.3\textwidth]{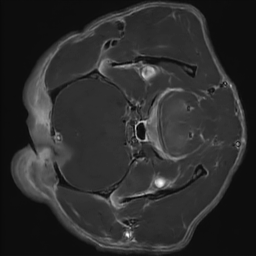}&
    \includegraphics[width=0.3\textwidth]{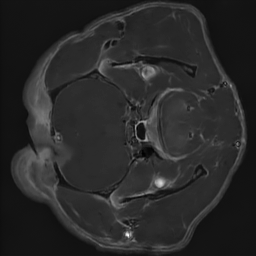}\\
    
    \begin{tikzpicture}
        \node [anchor=south west, inner sep=0] (image) at (0,0) {\includegraphics[width=0.3\textwidth]{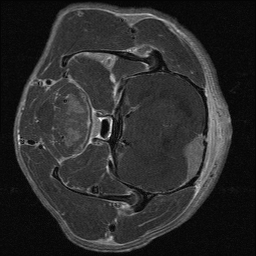}};
        \begin{scope}[x={(image.south east)}, y={(image.north west)}]
            \draw[red, thick] (0.75, 0.38) circle[radius=0.15];
        \end{scope}
    \end{tikzpicture} & 
    \includegraphics[width=0.3\textwidth]{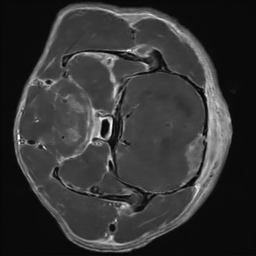}&
    \includegraphics[width=0.3\textwidth]{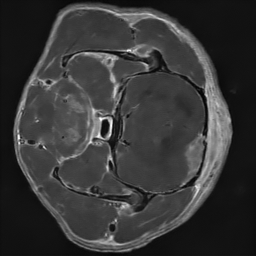}
    \end{tabular}
    \caption{}
    \end{subfigure}
 \hspace{3mm}
    \begin{subfigure}[t]{0.47\linewidth}
    \centering
    \begin{tabular}{p{21mm} p{21mm} p{30mm}} 
    \hspace{9mm} $\bx_H$  &  \hspace{3mm}NN-PL2H  & LIP-H2L-GenTV \\
    \includegraphics[width=0.3\textwidth]{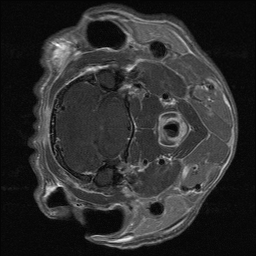}&
    \includegraphics[width=0.3\textwidth]{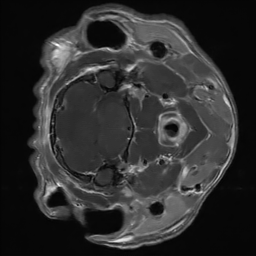}&
    \includegraphics[width=0.3\textwidth]{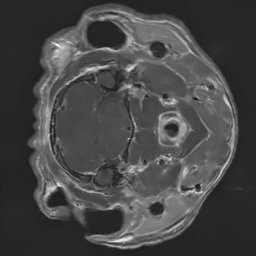}\\
    \includegraphics[width=0.3\textwidth]{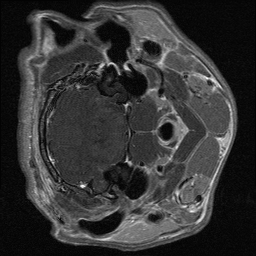}&
    \includegraphics[width=0.3\textwidth]{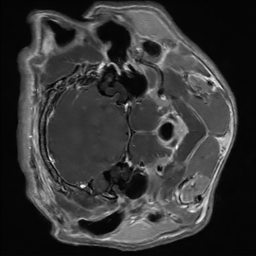}&
    \includegraphics[width=0.3\textwidth]{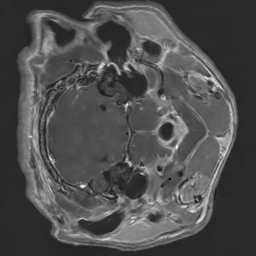}\\
    \includegraphics[width=0.3\textwidth]{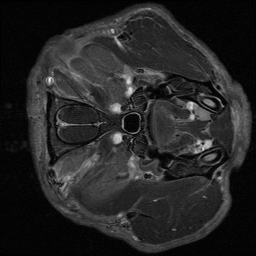}&
    \includegraphics[width=0.3\textwidth]{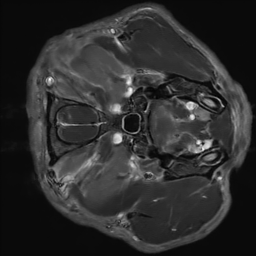}&
    \includegraphics[width=0.3\textwidth]{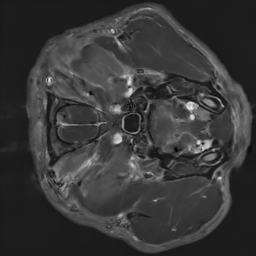}\\
    \end{tabular}
    \caption{}
    \end{subfigure}
    \caption{Results achieved on further test images, exploiting the pre-dose images. (a) Cases containing tumors, highlighted with red circles on the high-dose images; (b) cases containing no tumors.}
    \label{fig:testimages}
\end{figure}

For a further visual inspection, in Figure  \ref{fig:testimages}, we show results on several test images depicting different elements of the mice heads. Here, we can further appreciate the capability of the NN-PL2H and LIP-PH2L-GenTV frameworks to always approximate the high-dose images carefully.
In these images, the contrast of tumoral masses has been increased (see red circles), and meanwhile, no false positives have been generated in slices with no tumors, neither artifacts nor hallucinations. 
We conclude by noting that the LIP-PH2L-GenTV images are similar to the NN-PL2H ones at visual inspection but have better metrics. Therefore, our mathematically-grounded approach is competitive with the state-of-the-art. 

\subsection{Stability of the LIP-CAR approach with respect to noise}\label{ssec:results_IPRobustness}
We now compare the methods on test images affected by additive $\boldsymbol{\eta}^{out}$ noise. We start considering the test image 39. 
We consider various noise intensities, given by standard deviation $\sigma$, and we report here the results for equispaced $\sigma$ values in the $[0, 0.066]$ range. 
In Figure \ref{fig:ssim_robustness}, we plot the behavior of the SSIM values with respect to $\sigma$, for three LIP solvers and the state-of-the-art methods.
When the additive noise is absent ($\sigma=0$) or very small ($\sigma=0.11$), the methods using the pre-dose images provide very high SSIM, strongly overcoming the LIP-H2L-TV method. The NN-L2H approach gives a medium-high quality solution as well.
In the case of higher perturbations (right-hand side of the plot), there is a trend reversal where the NN-L2H, NN-PL2H, and LIP-PH2L-GenTV produce increasingly poor high-dose simulations as sigma increases.
The GenTV constraint, which forces LIP-PH2L-GenTV ``to remain close'' to the  NN-PL2H image, is negatively influenced by the instability of the NN-PL2H approach, as already observed in Section~\ref{ssec:results_dnn}. The regularization of the inverse problem formulation only partially mitigates this effect, as LIP-PH2L-GenTV consistently maintains higher SSIM values than NN-PL2H.
Interestingly, the two TV-regularized approaches remain stable and reliable in high out-of-domain noise, always providing SSIM values higher than 0.7.

For a broader comparison between the in- and the out-of-domain cases, we consider the SSIM values computed on the whole test set for $\sigma=0$ and $\sigma=0.033$. The corresponding boxplots are shown in Figure \ref{fig:ssim_robustness_boxplot}.
The median values are mostly coherent with the values depicted in Figure~\ref{fig:ssim_robustness} about the 39th image, and it demonstrates that we did not cherry-pick that image.
We also observe that the amplitudes of the boxplots are always very small, in the case of $\sigma=0$, and a few outliers occurred. 
Conversely, when $\sigma=0.033$, only our TV-based LIP approaches show limited boxplots with short whiskers, while the NN-L2H and NN-PL2H approaches manifest lower SSIM medians and higher fluctuations among values. \\
This analysis confirms that the presence of out-of-domain noise does always affect the computations, but in different ways, and the LIP-CAR approach demonstrates a stable behavior making it a reliable tool when noisy data must be processed. 
Note that all the regularization terms we use are convex, and our stability comparison analysis accurately reflects the observations made in \ref{ssec:robustness} and in Remark \ref{rem:stability} specifically.

\begin{figure}
    \centering
    \begin{subfigure}[t]{0.45\textwidth}
        \centering
        \includegraphics[width=\textwidth]{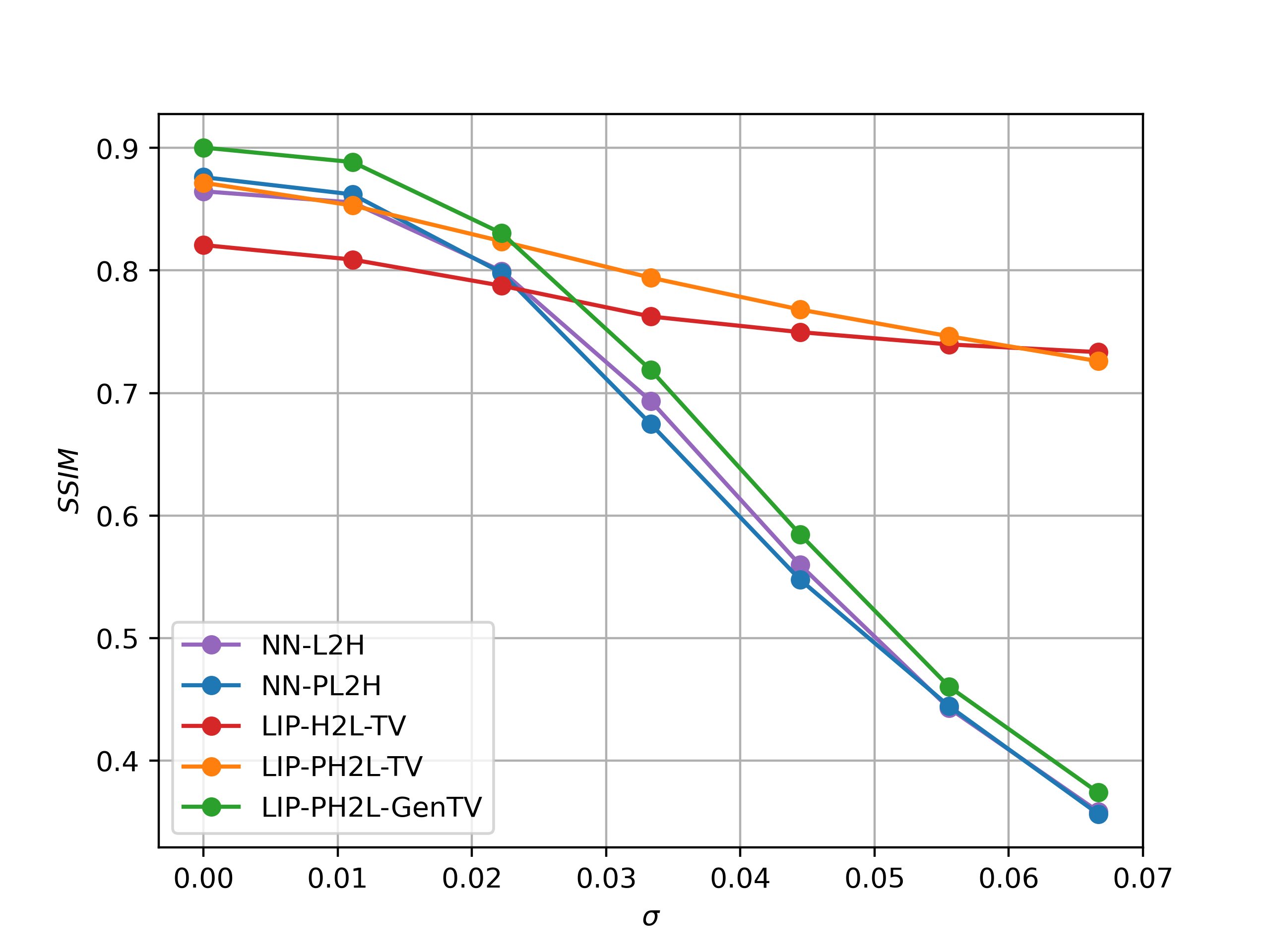}
        \caption{}
        \label{fig:ssim_robustness}
    \end{subfigure}
    \quad
    \begin{subfigure}[t]{0.45\textwidth}
        \centering
        \includegraphics[width=\textwidth]{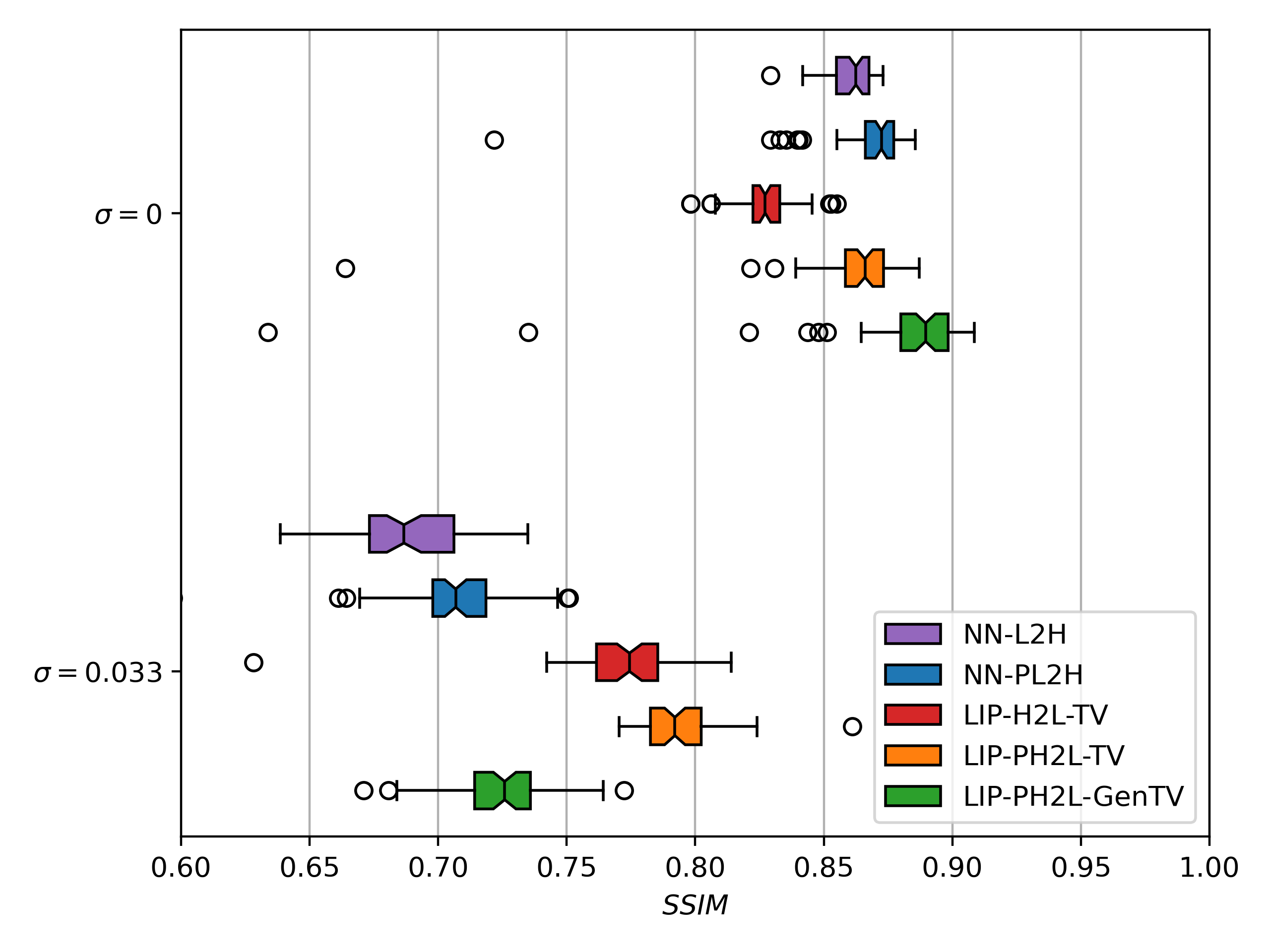}
        \caption{}
        \label{fig:ssim_robustness_boxplot}
    \end{subfigure}
    \caption{SSIM values on test images, when white Gaussian noise (with standard deviation $\sigma$) is added onto the low dose image. (a) SSIM at increasing $\sigma \in [0, 0.066]$ for image 39. (b) Boxplot comparing noiseless results with results obtained by adding noise with standard deviation $\sigma = 0.022$. }
\end{figure}

Finally, in Figure \ref{fig:visual_comparison}, we illustrate the NN-PL2H, LIP-PH2L-TV, and LIP-PH2L-GenTV images computed for the test image 39 at different noise intensities. In the first row, the out-of-domain noise component is null ($\sigma = 0$), and it serves as a reference. 
For each simulated high-dose image, the figure includes a cropped zoom-in over the tumoral mass.
It emerges clearly that while the computed images are quite similar in the in-domain test, the differences among the simulated high-dose outputs become more and more evident as $\sigma$ increases, confirming the different trends of the previous plots.
Within the NN-PL2H and the LIP-PH2L-GenTV frameworks, the images are more and more corrupted by a noisy pattern but small and thin details remain almost visible, whereas the TV prior smoothes fine details to preserve high contrasts and a high detachability of the main areas of interest.
Notably, the shape of the tumoral mass gets altered by the NN-PL2H processing for $\sigma > 0.022$ and almost vanishes for $\sigma = 0.044$.

\begin{figure*}
\centering
\begin{tabular}{l cc cc cc}
 & \multicolumn{2}{c}{NN-PL2H} & \multicolumn{2}{c}{LIP-PH2L-TV } & \multicolumn{2}{c}{LIP-PH2L-GenTV} \\
\rotatebox{90}{$\sigma = 0$} & 
    \begin{tikzpicture}
        \node [anchor=south west, inner sep=0] (image) at (0,0) {\includegraphics[height=0.15\textwidth]{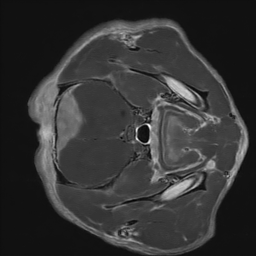}};
        \begin{scope}[x={(image.south east)}, y={(image.north west)}]
            \draw[red, thick] (0.15, 0.25) rectangle (0.48, 0.75);
        \end{scope}
    \end{tikzpicture} &
    \includegraphics[trim=10mm 17mm 35mm 17mm, clip, height=0.15\textwidth]{figures/stability/rec_PL2H_Unet_Bone_SSIM_39_sigma_0.000.png} \hspace{2mm} &
    \includegraphics[height=0.15\textwidth]{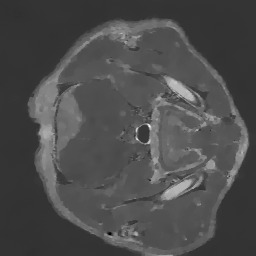} &
    \includegraphics[trim=10mm 17mm 35mm 17mm, clip, height=0.15\textwidth]{figures/stability/rec_PInvH2L_BM3D_Unet_Bone_SSIM_TV_39_sigma_0.000.png}  \hspace{2mm} &
    \includegraphics[height=0.15\textwidth]{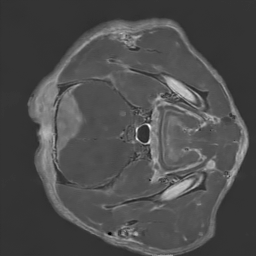} &
    \includegraphics[trim=10mm 17mm 35mm 17mm, clip, height=0.15\textwidth]{figures/stability/rec_PInvH2L_BM3D_Unet_Bone_SSIM_GenTV_39_sigma_0.000.png} \\    
\rotatebox{90}{$\sigma = 0.11$} & 
    \includegraphics[width=0.15\textwidth]{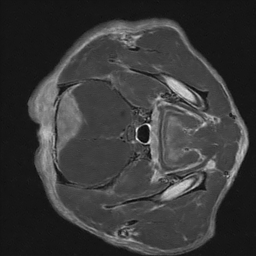} &
    \includegraphics[trim=10mm 17mm 35mm 17mm, clip, height=0.15\textwidth]{figures/stability/rec_PL2H_Unet_Bone_SSIM_39_sigma_0.011.png}  \hspace{2mm} &
    \includegraphics[width=0.15\textwidth]{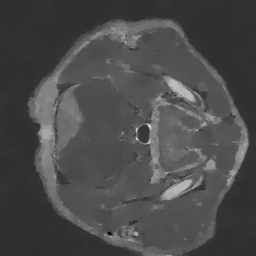} &
    \includegraphics[trim=10mm 17mm 35mm 17mm, clip, height=0.15\textwidth]{figures/stability/rec_PInvH2L_BM3D_Unet_Bone_SSIM_TV_39_sigma_0.011.png}  \hspace{2mm} &
    \includegraphics[width=0.15\textwidth]{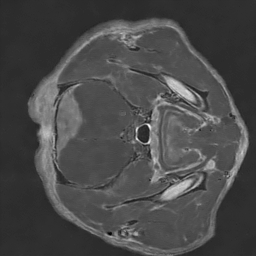} &
    \includegraphics[trim=10mm 17mm 35mm 17mm, clip, height=0.15\textwidth]{figures/stability/rec_PInvH2L_BM3D_Unet_Bone_SSIM_GenTV_39_sigma_0.011.png} \\  
\rotatebox{90}{$\sigma = 0.22$} & 
    \includegraphics[width=0.15\textwidth]{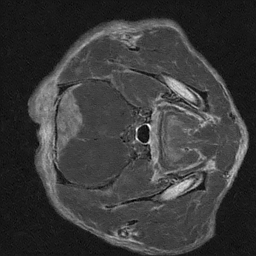} &
    \includegraphics[trim=10mm 17mm 35mm 17mm, clip, height=0.15\textwidth]{figures/stability/rec_PL2H_Unet_Bone_SSIM_39_sigma_0.022.png}  \hspace{2mm} &
    \includegraphics[width=0.15\textwidth]{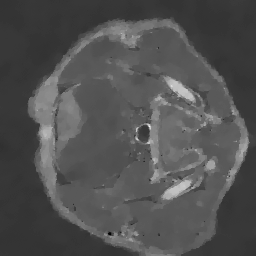} &
    \includegraphics[trim=10mm 17mm 35mm 17mm, clip, height=0.15\textwidth]{figures/stability/rec_PInvH2L_BM3D_Unet_Bone_SSIM_TV_39_sigma_0.022.png}  \hspace{2mm} &
    \includegraphics[width=0.15\textwidth]{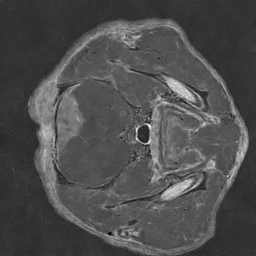} &
    \includegraphics[trim=10mm 17mm 35mm 17mm, clip, height=0.15\textwidth]{figures/stability/rec_PInvH2L_BM3D_Unet_Bone_SSIM_GenTV_39_sigma_0.022.png} \\  
\rotatebox{90}{$\sigma = 0.33$} & 
    \includegraphics[width=0.15\textwidth]{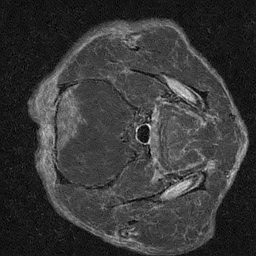} &
    \includegraphics[trim=10mm 17mm 35mm 17mm, clip, height=0.15\textwidth]{figures/stability/rec_PL2H_Unet_Bone_SSIM_39_sigma_0.033.png}  \hspace{2mm} &
    \includegraphics[width=0.15\textwidth]{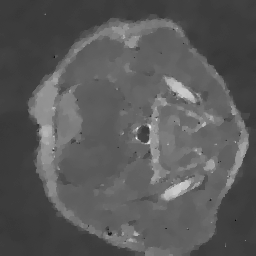} &
    \includegraphics[trim=10mm 17mm 35mm 17mm, clip, height=0.15\textwidth]{figures/stability/rec_PInvH2L_BM3D_Unet_Bone_SSIM_TV_39_sigma_0.033.png}  \hspace{2mm} &
    \includegraphics[width=0.15\textwidth]{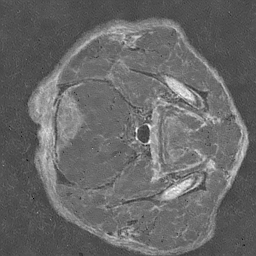} &
    \includegraphics[trim=10mm 17mm 35mm 17mm, clip, height=0.15\textwidth]{figures/stability/rec_PInvH2L_BM3D_Unet_Bone_SSIM_GenTV_39_sigma_0.033.png} \\  
\rotatebox{90}{$\sigma = 0.44$} & 
    \includegraphics[width=0.15\textwidth]{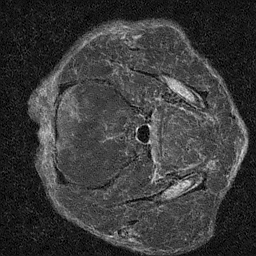} &
    \includegraphics[trim=10mm 17mm 35mm 17mm, clip, height=0.15\textwidth]{figures/stability/rec_PL2H_Unet_Bone_SSIM_39_sigma_0.044.png}  \hspace{2mm} &
    \includegraphics[width=0.15\textwidth]{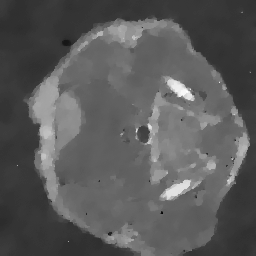} &
    \includegraphics[trim=10mm 17mm 35mm 17mm, clip, height=0.15\textwidth]{figures/stability/rec_PInvH2L_BM3D_Unet_Bone_SSIM_TV_39_sigma_0.044.png}  \hspace{2mm} &
    \includegraphics[width=0.15\textwidth]{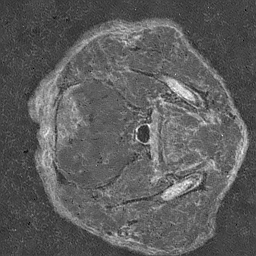} &
    \includegraphics[trim=10mm 17mm 35mm 17mm, clip, height=0.15\textwidth]{figures/stability/rec_PInvH2L_BM3D_Unet_Bone_SSIM_GenTV_39_sigma_0.044.png} \\ 

\end{tabular}
\caption{Reconstructed images by NN-PL2H, IP-PH2L-TV, and IP-PH2L-GenTV at varying noise intensities, and the corresponding magnifications on the area depicted by the red rectangle,  with the tumoral mass that must be enhanced with the contrast-agent digital simulation. }
\label{fig:visual_comparison}
\end{figure*}

\section{Conclusions}\label{sec:conclusions}
This paper introduces a novel application of neural networks, showing great potential. While many studies have already exploited deep architectures to extract imaging patterns and used them to solve imaging problems with end-to-end approaches directly, we use a convolutional network to learn an operator map in the inverse direction. We then tackle the imaging task as an inverse problem. This approach allows us to handle imaging applications with mathematically grounded tools that belong to the well-established class of optimization and regularization techniques.

As an explanatory application of our approach, we consider a  medical challenge involving contrast-agent imaging. The problem, named CAR (Contrast Agent Reduction), consists of digitally simulating the image achieved by injecting the patients with a high dosage of CA medium  while subjecting them to a reduced dose (resulting in images with lower contrast and reduced lesion detectability). 
Despite fast growing interest in the scientific community, the literature on the CAR  task from a purely image-to-image approach is still limited. 
Thus, we have poured our main effort into designing a framework (LIP-CAR) that could be effectively necessary and used in practice.
Additionally, as our LIP-CAR proposal solves a regularization model,
it does not turn the image processing into a fully unexplainable black box, which is an important feature for medical applications.

We have assessed the LIP-CAR numerical consistency with our mathematical, in-depth theoretical analysis through several experiments on a real, original pre-clinical dataset.  
Specifically, we have analyzed the predictive accuracy of the simulated images  and assessed the LIP-CAR robustness.
The quality of the simulated images produced by the LIP-CAR method improves upon or is comparable to those obtained by the state-of-the-art direct approach. Moreover, while direct methods are very sensitive to perturbations, LIP-CAR is stable. 
The overall LIP-CAR proposal is thus a flexible and well-performing tool for the CAR problem. 

To conclude, we highlight that this work represents an introductory study of the LIP-CAR approach. Here, we have not focused on many aspects where it can be further improved. To set some examples, we have not fine-tuned the choice of the optimization formulation in terms of the data-fitting term and, above all, of the regularizer. Even the fine-tuning of the regularization parameter(s) plays an important role in the final results, but this study is out of the scope of this paper. 

\section*{Acknowledgements}
The results presented in this study stem from the  DeepCAST project, funded by Bracco Imaging Medical Technologies Co. Ltd.\\
D. Bianchi is partially supported by the Startup Fund of Sun~Yat-sen~University. \\
D. Evangelista and E. Morotti are partially supported by “Gruppo Nazionale per il Calcolo Scientifico (GNCS-INdAM)” (Progetto 2024 “Deep Variational Learning: un approccio combinato per la ricostruzione di immagini"), and the PRIN 2022 project “STILE: Sustainable Tomographic Imaging with Learning and rEgularization”, project code: 20225STXSB, funded by the European Commission under the NextGeneration EU programme.\\
The authors thank A. Fringuello Mingo and F. La Cava for conducting the MRI preclinical study on a rodent tumour model.

\bibliographystyle{abbrv}
\bibliography{DeepCAST_paper}
\end{document}